\documentclass[]{fairmeta}
\usepackage{lineno}
\usepackage{calc}
\usepackage{amsmath}
\usepackage{caption}
\usepackage{multirow}
\usepackage{amsfonts}
\usepackage{xspace}
\usepackage[font=footnotesize]{subcaption}
\usepackage{booktabs}
\usepackage{pdfpages}
\usepackage{xcolor}
\usepackage{tabularx}
\usepackage{bbm}
\usepackage{graphicx}
\usepackage{algorithm}
\usepackage{algorithmic}
\usepackage{enumitem}
\usepackage{pifont}

\usepackage{amsmath}
\usepackage{amssymb}
\usepackage{mathtools}
\usepackage{amsthm}
\usepackage{xspace}
\usepackage{algorithm}

\theoremstyle{plain}
\newtheorem{theorem}{Theorem}[section]
\newtheorem{proposition}[theorem]{Proposition}

\newtheorem{corollary}[theorem]{Corollary}
\theoremstyle{definition}

\theoremstyle{remark}
\newtheorem{remark}[theorem]{Remark}

\newcommand{\ours}{\textsc{EBPO}\xspace}

\title{EBPO: Empirical Bayes Shrinkage for Stabilizing Group-Relative Policy Optimization}

\author[*]{Kevin Han}
\author[*]{Yuhang Zhou}
\author{Mingze Gao}
\author{Gedi Zhou}
\author{Serena Li}
\author{Abhishek Kumar}
\author{Xiangjun Fan}
\author[\dagger]{Weiwei Li}
\author[\dagger]{Lizhu Zhang}

\affiliation{Meta AI}

\contribution[*]{Equal contribution.}
\contribution[\dagger]{Joint last author.}

\abstract{
Reinforcement Learning with Verifiable Rewards (RLVR) has proven effective for enhancing the reasoning capabilities of Large Language Models (LLMs). However, dominant approaches like Group Relative Policy Optimization (GRPO) face critical stability challenges: they suffer from high estimator variance under computational constraints (small group sizes) and vanishing gradient signals in saturated failure regimes where all responses yield identical zero rewards. To address this, we propose Empirical Bayes Policy Optimization (EBPO), a novel framework that regularizes local group-based baselines by borrowing strength from the policy’s accumulated global statistics. Instead of estimating baselines in isolation, EBPO employs a shrinkage estimator that dynamically balances local group statistics with a global prior updated via Welford’s online algorithm. Theoretically, we demonstrate that EBPO guarantees strictly lower Mean Squared Error (MSE), bounded entropy decay, and non-vanishing penalty signals in failure scenarios compared to GRPO. Empirically, EBPO consistently outperforms GRPO and other established baselines across diverse benchmarks, including AIME and OlympiadBench. Notably, EBPO exhibits superior training stability, achieving high-performance gains even with small group sizes, and benefits significantly from difficulty-stratified curriculum learning.

}

\date{\today}
\correspondence{Kevin Han, Yuhang Zhou and Lizhu Zhang at \email{\{kevinwh, zyhang, lizhu\}@meta.com}\\}

\begin{document}

\maketitle

\section{Introduction}

Post-training for Large Language Models (LLMs) has increasingly shifted toward reinforcing reasoning using verifiable signals. Reinforcement Learning with Verifiable Rewards (RLVR) has emerged as a stable and reproducible paradigm for tasks such as mathematical reasoning and code generation, where objective correctness can serve as a ground-truth reward signal \citep{shao2024deepseekmath, guo2025deepseek}. To avoid the computational overhead of training separate value networks, recent advancements have coalesced around group-based methods, most notably Group Relative Policy Optimization (GRPO) \citep{shao2024deepseekmath}. By normalizing rewards within a sampled group of outputs for a given prompt, GRPO provides a computationally efficient baseline that has demonstrated significant improvements in model reliability \citep{shao2024deepseekmath, yu2025dapo}.


Despite its success, GRPO faces inherent limitations regarding the stability of its advantage estimator. Because the baseline is derived solely from the local group mean, it is susceptible to high variance when the group size ($G$) is small \citep{yu2025dapo}. Furthermore, GRPO lacks robustness in ``saturated'' regimes: if a model fails all attempts for a difficult prompt (all rewards are zero), the relative advantage vanishes, resulting in a null gradient and a wasted training step \citep{liu2025understanding}. While recent approaches like DAPO \citep{yu2025dapo} attempt to mitigate this volatility by filtering or reweighting unstable updates, they do so by effectively discarding the problematic examples, leading to unavoidable data waste. Consequently, standard GRPO often necessitates larger group sizes to suppress noise without losing data, thereby drastically increasing the computational cost of training \citep{shao2024deepseekmath}.

In this work, we introduce \textbf{Empirical Bayes Policy Optimization (EBPO)}, which reframes advantage estimation through the lens of Empirical Bayes (EB) inference \citep{robbins1992empirical}. Classical EB theory suggests that when estimating parameters for parallel tasks, one can reduce estimation error by assuming those parameters share a common underlying distribution \citep{robbins1992empirical}. Applying this to RLVR, we postulate that the latent success probability of a prompt is drawn from a global distribution characterized by the policy's historical performance.

EBPO replaces the purely local GRPO baseline with a shrinkage estimator that pulls the noisy group mean toward a global mean, $\mu_{glob}$. The degree of this shrinkage is determined dynamically by the ratio of within-group variance to between-group variance. This formulation allows EBPO to distinguish between failing a globally ``hard'' task (consistent with the prior) versus failing an ``easy'' task (deviating from the prior), assigning adaptive penalty signals accordingly. To ensure scalability, we estimate these global priors dynamically using Welford's online algorithm \citep{welford1962note}. 



We validate EBPO through both theoretical analysis and extensive empirical evaluation. Theoretically, we prove that the EBPO baseline achieves a strictly lower Mean Squared Error (MSE) than the standard sample mean used in GRPO and provides informative, non-zero gradients even when group rewards are saturated. Empirically, we evaluate EBPO on various datasets with different scales of base models. Our results show that EBPO consistently outperforms baselines on challenging benchmarks such as AIME-2024 \citep{li2024numinamath} and OlympiadBench \citep{he2024olympiadbench}. Furthermore, we demonstrate that EBPO is highly sample-efficient, outperforming GRPO by over 11\% in extremely resource-constrained settings ($G = 8$), while effectively leveraging curriculum learning via difficulty-stratified sampling.

Our contributions are summarized as follows:
\begin{itemize}
    \item \textbf{Algorithmic Framework:} We propose EBPO, which integrates online Empirical Bayes estimation into the GRPO framework to stabilize advantage computation.
    \item \textbf{Theoretical Guarantees:} We provide proofs that EBPO resolves the vanishing gradient problem in saturated groups and strictly reduces the variance of the baseline estimator compared to GRPO.
    \item \textbf{State-of-the-Art Performance:} We demonstrate that EBPO achieves superior performance across multiple models and benchmarks, particularly when combined with difficulty-based curriculum learning.
\end{itemize}
\section{Related Work}
\label{sec:related}

\paragraph{Reinforcement Learning with Verifiable Rewards.}

Previous work has shown that RLVR is effective in enhancing reasoning and factual correctness in LLMs \citep{shao2024deepseekmath, guo2025deepseek}. By leveraging automatically checkable signals, RLVR enables stable and reproducible optimization for reasoning tasks. A variety of algorithmic variants were proposed, including GRPO-based approaches that integrate verifiable rewards into group-based policy optimization frameworks \citep{shao2024deepseekmath, liu2025understanding, yu2025dapo, cui2025entropy, zheng2025group, zhou2025mixture}. These methods demonstrate that verifiable reward signals can significantly improve model reliability. 

Among these GRPO variants, a broad range of studies focus on improving the effectiveness and efficiency of RLVR training, such as by proposing novel learning objectives \citep{liu2025understanding, zhou2025disco, yu2025dapo, yue2025vapo, liu2025uniform}, introducing entropy-based mechanisms to encourage broader exploration \citep{cui2025entropy, wang2025beyond, zhang2025edge, le2025no, yang2025let}, or designing more robust reward functions \citep{jia2025autorubric, li2025semantically, xiong2026token}. However, few have considered the estimation of the mean or standard deviation of rewards during advantage computation, which can substantially influence group-level differences. Our work addresses this gap by employing an Empirical Bayes method to more robustly estimate the mean and standard deviation of group rewards, thereby stabilizing the training dynamics of GRPO.

\citet{zeng2025shrinking} also address the high-variance bottleneck in GRPO, using Stein’s Paradox to theoretically guarantee that their shrinkage-based baseline reduces variance. Unlike their approach, our \ours framework uses Empirical Bayes inference to dynamically control shrinkage based on variance ratios, which is crucial for avoiding vanishing gradients in challenging prompts. Both works highlight the importance of global-local baseline interpolation for stable RL with verifiable rewards.

\paragraph{Empirical Bayes Estimation}

Empirical Bayes provides a framework to handle estimation in the face of parallel, noisy data. It is a powerful methodology that blends the data-driven nature of frequentist statistics with the hierarchical modeling structure of Bayesian inference, offering a principled approach to regularization and robust estimation. Instead of specifying the prior hyperparameters in advance, Empirical Bayes uses the observed data to estimate them \citep{robbins1992empirical}. Later, statisticians found that shrinkage estimators like the James-Stein estimator \citep{james1961estimation} can be viewed as Empirical Bayes estimators with a Gaussian prior. Empirical Bayes methods have been widely adopted for hyperparameter tuning in Bayesian reinforcement learning \citep{reisinger2008online} and prior estimation in multi-task meta-learning \citep{hu2020empirical, grant2018recasting}. Additionally, these methods facilitate uncertainty estimation in deep neural networks \citep{krishnan2020specifying} and the development of Bayesian dialogue agents through data-driven parameter initialization \citep{lee-etal-2023-empirical}.

While Bayesian principles have been increasingly applied to LLMs, their utility has been largely confined to evaluation frameworks and black-box optimization rather than online policy training. For instance, \citet{liu2024large} integrates LLMs into Bayesian Optimization to enhance surrogate modeling and candidate sampling for hyperparameter tuning. \citet{xiao2025confidence} and \citet{hariri2025don} propose a Bayesian approach to estimate model capabilities in the evaluation process. However, these approaches operate primarily on static evaluation or auxiliary optimization tasks. To the best of our knowledge, EBPO is the first framework to integrate Empirical Bayes directly into the online optimization loop of GRPO, utilizing shrinkage estimators to stabilize gradient variance during active training.

\section{Methodology}

\subsection{Preliminaries}

\paragraph{Reinforcement Learning from Verifiable Rewards (RLVR)}
RLVR provides a memory-efficient framework for optimizing reasoning policies $\pi_\theta$ in domains where the correctness of a response $o$ can be objectively determined by a verifier \citep{shao2024deepseekmath}. Unlike traditional actor-critic methods that require a learned value network, RLVR utilizes group-relative advantage estimation to provide a baseline for policy updates.

For a given prompt $q$, the policy $\pi_\theta$ samples a group of $G$ independent trajectories $\{o_1, o_2, \dots, o_G\}$. Each trajectory is assigned a verified reward $r_i \in \{0, 1\}$ (or a shaped scalar reward) based on its correctness. The advantage $\hat{A}_i$ for each response is then computed by standardizing the reward within its group: $\hat{A}_i = \frac{r_i - \mu_{\text{group}}}{\sigma_{\text{group}} + \epsilon}$, 
where $\mu_{\text{group}}$ and $\sigma_{\text{group}}$ are the empirical mean and standard deviation of the rewards within the group:
\begin{equation*}
    \mu_{\text{group}} = \frac{1}{G} \sum_{j=1}^G r_j, \quad \sigma_{\text{group}} = \sqrt{\frac{1}{G-1} \sum_{j=1}^G (r_j - \mu_{\text{group}})^2}
\end{equation*}

The policy parameters $\theta$ are updated by maximizing the clipped surrogate objective:

\begin{equation}
    \mathcal{J}_{\text{RLVR}}(\theta) = \mathbb{E}_{q \sim \mathcal{D}} \bigg[ \frac{1}{G} \sum_{i=1}^G \min \Big( \rho_i(\theta) \hat{A}_i, \text{clip}(\rho_i(\theta), 1-\epsilon, 1+\epsilon) \hat{A}_i \Big) \bigg]
\end{equation}
where $\rho_i(\theta) = \frac{\pi_\theta(o_i|q)}{\pi_{\theta_{\text{old}}}(o_i|q)}$ is the importance sampling ratio. While RLVR eliminates the computational cost of a critic model, the estimator $\mu_{\text{group}}$ suffers from high variance when $G$ is small. This instability often leads to noisy gradients, particularly in mathematical reasoning where the reward distribution is sparse.

\paragraph{Empirical Bayes and Simultaneous Estimation}

In classical Bayesian inference, a prior distribution is specified before any data is observed. In Empirical Bayes (EB), the prior is instead estimated from the data itself \citep{robbins1992empirical}. Consider the general problem of simultaneously estimating a set of $M$ related parameters $\theta_1, \dots, \theta_M$ based on independent observations $y_1, \dots, y_M$, where each $y_m$ serves as a noisy estimate of $\theta_m$. Rather than estimating each parameter in isolation---which often yields high variance when data is sparse---EB assumes that the parameters are exchangeable and drawn from a common global prior distribution:
$\theta_m \sim \mathcal{N}(\mu, \tau^2)$,
where $\mu$ is the global mean and $\tau^2$ is the between-parameter variance (or prior variance). By estimating these global hyperparameters $(\mu, \tau^2)$ from the marginal distribution of the entire dataset, we can construct a posterior estimate for each individual $\theta_m$. This results in a shrinkage estimator, which ``pulls'' individual, noisy observations toward the global mean $\mu$. 


\subsection{Empirical Bayes Policy Optimization (EBPO)}
A primary challenge in post-training LLMs on reasoning tasks, such as mathematics or formal logic, is the \textit{sparsity and high variance of rewards} \citep{uesato2022solving, zelikman2022star, lightman2023lets, shao2024deepseekmath, yu2025dapo}. In GRPO methods, the advantage is computed by normalizing rewards within a group of completions for a single prompt. However, when a prompt is excessively difficult, the model may fail all attempts (i.e., all rewards are zero), resulting in a null gradient. Conversely, for trivial prompts, the model might succeed in all attempts, again providing no relative signal for improvement. 

We propose \textbf{Empirical Bayes Policy Optimization (EBPO)}, which regularizes the local group-based baseline by ``borrowing strength'' from the global performance distribution of the policy. We assume that for any prompt $q$, the true latent success probability $\theta_q$ is drawn from a global distribution: $\theta_q \sim \mathcal{N}(\mu_{\text{glob}}, \tau^2)$, where $\mu_{\text{glob}}$ represents the global average success rate of the policy and $\tau^2$ captures the variance in difficulty across different prompts. While Beta-Binomial models are natural for binary rewards, we employ a Gaussian approximation to facilitate tractable online inference, which yields a closed-form linear shrinkage estimator that avoids the numerical hyperparameter optimization required by Beta priors, ensuring computational efficiency for online training. Following the Empirical Bayes paradigm, we do not fix these hyperparameters \textit{a priori} but estimate $\mu_{\text{glob}}$ and $\tau^2$ dynamically from the data across all prompts in the training history. This allows us to compute a shrinkage estimator that pulls the noisy local mean $\mu_{\text{group}}$ toward the global policy average (Figure~\ref{fig:ebpo_overview}).

\begin{figure*}[t]
    \centering
    \includegraphics[width=0.95\textwidth]{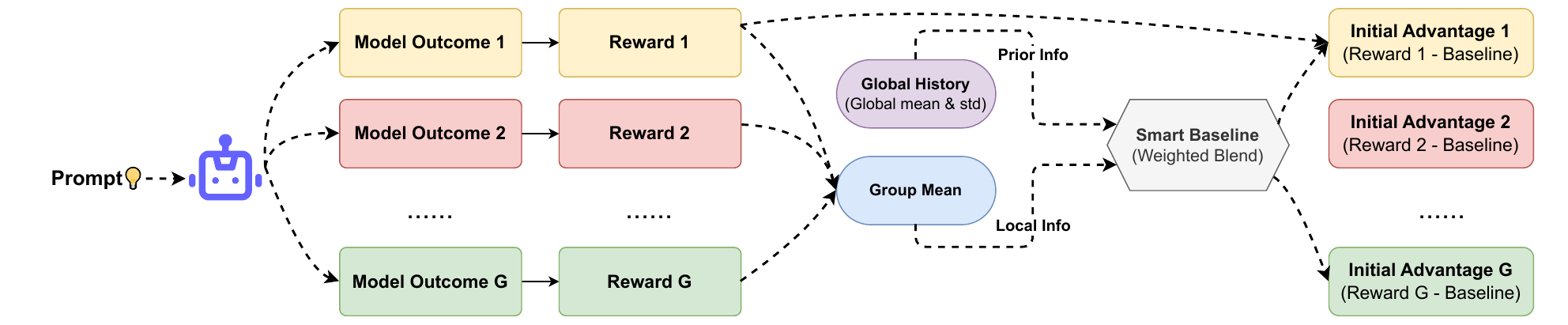}
    \caption{\textbf{Overview of the EBPO Framework.} Unlike standard GRPO which relies solely on the local group mean, EBPO computes a ``Smart Baseline'' (Shrinkage Estimator) by blending the noisy local group mean with a stable global prior (updated via global history). This allows for informative advantage estimates even in small groups or saturated failure regimes.}
    \label{fig:ebpo_overview}
\end{figure*}

Let $\{q_1, \dots, q_M\}$ be a batch of prompts. For each prompt $q_m$, we sample $G$ responses $\{o_{m,1}, \dots, o_{m,G}\}$ and obtain rewards $r_{m,i}$. We define the local group mean as $\bar{r}_m = \frac{1}{G} \sum_{i=1}^G r_{m,i}$. EBPO models the relationship between local performance and global policy capability using two levels of variance:
\begin{enumerate}
   \item \textbf{Within-group variance ($\sigma^2$)}: The variance of individual rewards for a given prompt across all generations. We estimate this with $\sigma^2 = \text{Var}(r_{m,i})$.
   \item \textbf{Between-group variance ($\tau^2$)}: The variance of the true latent success rates across different prompts. We estimate this directly using the variance of the observed prompt means $\tau^2 = \text{Var}(\bar{r}_m)$.
\end{enumerate}
\begin{remark}[Online Stability vs. Bias for Between Group Variances] \label{remark:between_group_var}
    We note that strictly speaking we would define the prior variance as $\tau_{latent}^2 = \text{Var}(\bar{r}_m) - \sigma^2/G$ to remove the sampling noise. However, in online training settings, this subtraction can lead to negative variance estimates. By using the raw $\text{Var}(\bar{r}_m)$ as a proxy for $\tau^2$, we effectively inflate the denominator, resulting in a \textit{conservative} shrinkage estimator. This ensures the shrinkage factor $\mathcal{S}_q$ remains strictly in $[0, 1]$ and favors the local group mean when uncertainty is high, providing a safe "soft" regularization that prevents policy collapse without over-constraining the model.
\end{remark}
The EBPO Baseline $V_q^{EB}$ for a prompt $q$ is defined as the posterior mean of the reward:
\begin{equation}
    V_q^{EB} = (1 - \mathcal{S}_q) \mu_{\text{group}} + \mathcal{S}_q \mu_{\text{glob}}
\end{equation}
where the shrinkage factor $\mathcal{S}_q \in [0, 1]$ is determined by the ratio of variances:
\begin{equation}
    \mathcal{S}_q = \frac{\sigma^2 / G}{\sigma^2 / G + \tau^2}
\end{equation}

To ensure training stability and scalability, we maintain running estimates of $\mu_{glob}$, $\sigma^2$, and $\tau^2$ using \textbf{Welford's Online Algorithm} \citep{welford1962note}. This avoids the noise associated with small-batch statistics. For each step, the advantage for a completion $(q_m, o_{m,i}, r_{m,i})$ is computed as the deviation from this regularized baseline:
\begin{equation}
    \hat{A}_{m,i} = \frac{(r_{m,i} - V_{q_m}^{EB}) - \mu_{\hat{A}}}{\sigma_{\hat{A}} + \epsilon}
\end{equation}
where $\mu_{\hat{A}}$ and $\sigma_{\hat{A}}$ are batch-level mean and standard deviation of raw advantages. The full algorithm is presented in Algorithm~\ref{alg:EBPO} in Appendix~\ref{sec:supp_materials}.

\subsection{Theoretical Analysis of EBPO}

In this section, we analyze the theoretical properties of the EBPO estimator and formally demonstrate its advantages over the standard GRPO baseline. Proofs of results in this section can be found in Appendix \ref{sec:proof_details}.

\subsubsection{Definitions and Setup}
Let $q$ be a prompt and $\{o_1, \dots, o_G\}$ be a group of $G$ responses generated by policy $\pi_\theta$. Let $r_i \in \{0, 1\}$ be the binary reward for response $o_i$.

\textbf{GRPO Baseline:} The GRPO baseline is the local sample mean:
\begin{equation}
    V^{\text{GRPO}} = \mu_{\text{group}} = \frac{1}{G}\sum_{i=1}^G r_i
\end{equation}
The GRPO raw advantage (without normalization) is $\hat{A}^{raw}_{\text{GRPO}}(o_i) = r_i - V^{\text{GRPO}}$.

\textbf{EBPO Baseline:} The EBPO baseline is the shrinkage estimator:
\begin{equation}
    V^{\text{EBPO}} = (1 - \mathcal{S})\mu_{\text{group}} + \mathcal{S}\mu_{\text{glob}}
\end{equation}
where $\mathcal{S} \in (0, 1]$ is the shrinkage factor and $\mu_{\text{glob}} > 0$ is the global policy success rate. The EBPO raw advantage is $\hat{A}^{raw}_{\text{EBPO}}(o_i) = r_i - V^{\text{EBPO}}$.

\subsubsection{Theoretical Properties of EBPO} 
The first two theorems are straightforward from our construction. Unlike GRPO, which is entirely local, EBPO allows the model to differentiate between failing a difficult task (where the reward remains close to the baseline) and failing an easy task (where the baseline is high, leading to a large negative advantage).
\begin{theorem}[Non-Vanishing Gradients in Saturation Regimes]
\label{thm:non_vanishing}
Consider a \textbf{saturated failure group} where the policy fails on all sampled responses for a specific prompt $q$, i.e., $r_i = 0$ for all $i \in \{1, \dots, G\}$. Under these conditions:
\begin{enumerate}
    \item The GRPO gradient contribution is zero (vanishes).
    \item The EBPO gradient contribution is non-zero (informative), provided $\mu_{\text{glob}} > 0$ and the batch is not homogeneous in the sense that $\sigma_{\hat{A}} > 0$.
\end{enumerate}
\end{theorem}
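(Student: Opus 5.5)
The plan is to reduce both parts to a direct computation of the advantage, and then use the fact that the gradient of the clipped surrogate is a sum of terms $\hat{A}_i \nabla_\theta \rho_i(\theta)$ (inside the unclipped region, e.g.\ at $\theta=\theta_{\text{old}}$ where $\rho_i=1$). Each such term is linear in $\hat{A}_i$. A prompt's gradient contribution therefore vanishes identically exactly when all of its advantages are zero. So the whole statement comes down to showing that the advantages for a saturated failure group are all zero under GRPO and nonzero under EBPO.

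\textbf{Step 1 (GRPO).} With $r_i=0$ for all $i$ we get $\mu_{\text{group}}=0$ and $\sigma_{\text{group}}=0$. Hence $\hat{A}_i=(0-0)/(0+\epsilon)=0$ for every $i$. Every term $\min(\rho_i\hat{A}_i,\text{clip}(\rho_i)\hat{A}_i)$ is then identically $0$ as a function of $\theta$, so its gradient vanishes. The same conclusion holds for the raw advantage $\hat{A}^{raw}_{\text{GRPO}}=r_i-V^{\text{GRPO}}=0$.

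\textbf{Step 2 (EBPO raw advantage).} Here $\mu_{\text{group}}=0$, so $V^{\text{EBPO}}=\mathcal{S}\mu_{\text{glob}}$. Since $\mathcal{S}\in(0,1]$ and $\mu_{\text{glob}}>0$, this baseline is strictly positive. Therefore $\hat{A}^{raw}_{\text{EBPO}}(o_i)=-\mathcal{S}\mu_{\text{glob}}<0$ for every $i$, which is a uniform penalty whose size scales with how easy the prompt is believed to be. I would also note why $\mathcal{S}>0$ is the right assumption: from $\mathcal{S}_q=\frac{\sigma^2/G}{\sigma^2/G+\tau^2}$, it holds whenever the running within-group variance $\sigma^2>0$, which is the case once any nondegenerate group has been seen.

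\textbf{Step 3 (normalized advantage, the main obstacle).} The actual update uses $\hat{A}_{m,i}=\big((r_{m,i}-V^{EB}_{q_m})-\mu_{\hat{A}}\big)/(\sigma_{\hat{A}}+\epsilon)$. Batch-level centering could in principle cancel the penalty: if every group in the batch had the same raw advantage, then $\hat{A}_{m,i}=0$. This is exactly the case that the hypothesis $\sigma_{\hat{A}}>0$ excludes. The argument I would make is as follows. Suppose the normalized advantage for the saturated group were zero; then $\mu_{\hat{A}}=-\mathcal{S}\mu_{\text{glob}}$. I would then try to show that, without further assumptions, the intended claim is that the saturated group's raw advantage differs from the batch mean. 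Put differently, I would interpret "not homogeneous" as $-\mathcal{S}\mu_{\text{glob}}\neq\mu_{\hat{A}}$. A sufficient condition is that the batch contains some response with nonnegative raw advantage (for instance any success, since $1-V^{EB}>0$ when $V^{EB}<1$), while the remaining raw advantages are bounded below by the saturated group's value. In that situation $\mu_{\hat{A}}>-\mathcal{S}\mu_{\text{glob}}$, so $\hat{A}_{m,i}<0$ strictly.

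I expect this identification to be the delicate point. I would state it explicitly as the precise meaning of the non-homogeneity assumption. With it in place, nonzero advantages give a nonzero weight on $\nabla_\theta\log\pi_\theta(o_i\mid q)$, and that completes part 2.
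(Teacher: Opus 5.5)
Your Steps 1 and 2 are exactly the paper's proof. The paper computes $\hat{A}^{raw}_{\text{GRPO}}=0$ and $\hat{A}^{raw}_{\text{EBPO}}=-\mathcal{S}\mu_{\text{glob}}<0$ and stops there; it never addresses the batch normalization. Your Step 3 is needed if part 2 refers to the advantage actually used in the update, and as written it has a genuine gap.

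The saturated group's normalized advantage is $(-\mathcal{S}\mu_{\text{glob}}-\mu_{\hat{A}})/(\sigma_{\hat{A}}+\epsilon)$. Reading ``not homogeneous'' as $\mu_{\hat{A}}\neq-\mathcal{S}\mu_{\text{glob}}$ therefore assumes the conclusion. The sufficient condition you offer instead is an extra hypothesis, and it fails in typical batches. A failure in a mixed group with $\bar r_m>0$ has raw advantage $-(1-\mathcal{S})\bar r_m-\mathcal{S}\mu_{\text{glob}}$. This is strictly below $-\mathcal{S}\mu_{\text{glob}}$ whenever $\mathcal{S}<1$, so the other raw advantages are not bounded below by the saturated group's value.

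The missing idea is that $\mathcal{S}$ and $\mu_{\text{glob}}$ are shared by every prompt in the batch (Algorithm~\ref{alg:EBPO} computes them once per iteration). This lets you compute $\mu_{\hat{A}}$ in closed form. Averaging $r_{m,i}-(1-\mathcal{S})\bar r_m-\mathcal{S}\mu_{\text{glob}}$ over $i$ and then over $m$ gives $\mu_{\hat{A}}=\mathcal{S}(\bar r-\mu_{\text{glob}})$, where $\bar r=\frac{1}{M}\sum_m\bar r_m$ is the batch success rate. Hence every response in the saturated group receives
\begin{equation*}
\hat{A}_{m,i}=\frac{-\mathcal{S}\mu_{\text{glob}}-\mathcal{S}(\bar r-\mu_{\text{glob}})}{\sigma_{\hat{A}}+\epsilon}=-\frac{\mathcal{S}\,\bar r}{\sigma_{\hat{A}}+\epsilon}.
\end{equation*}
If $\bar r=0$, every group in the batch is a saturated failure, all raw advantages equal $-\mathcal{S}\mu_{\text{glob}}$, and so $\sigma_{\hat{A}}=0$. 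The stated hypothesis $\sigma_{\hat{A}}>0$ therefore forces $\bar r>0$. With $\mathcal{S}>0$, the normalized advantage is then strictly negative, and no reinterpretation of the hypothesis is needed.

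Note also that $\mu_{\text{glob}}$ cancels after centering. The hypothesis $\mu_{\text{glob}}>0$ is used only for the raw-advantage claim, and the normalized penalty scales with the batch success rate rather than with $\mu_{\text{glob}}$.
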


\begin{theorem}[Stability via MSE Reduction]
\label{thm:mse_stability}
Assume the true latent success rate for prompt $q$ is $\theta_q$, and the observed group mean $\mu_{\text{group}}$ is an unbiased estimator of $\theta_q$ with variance $\sigma^2/G$. Under the Gaussian approximation where $\theta_q \sim \mathcal{N}(\mu_{\text{glob}}, \tau^2)$, the EBPO baseline $V^{\text{EBPO}}$ achieves strictly lower Mean Squared Error (MSE) than the GRPO baseline $\mu_{\text{group}}$ for estimating the true difficulty $\theta_q$.
\end{theorem}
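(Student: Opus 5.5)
The argument is a direct bias--variance computation. I will treat $\mu_{\text{glob}}$, $\sigma^2$ and $\tau^2$ as fixed hyperparameters, so that $\mathcal{S}$ is a deterministic constant; this matches the Welford-estimated quantities, which depend only weakly on the current group. I will take the MSE to be the Bayes risk, averaged over both the sampling noise and the prior draw $\theta_q \sim \mathcal{N}(\mu_{\text{glob}}, \tau^2)$. Write $\mu_{\text{group}} = \theta_q + \varepsilon$, where $\mathbb{E}[\varepsilon \mid \theta_q] = 0$ and $\mathrm{Var}(\varepsilon \mid \theta_q) = \sigma^2/G$.

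\textbf{Baseline MSE.} For GRPO, unbiasedness gives $\mathrm{MSE}(\mu_{\text{group}}) = \mathbb{E}[\varepsilon^2] = \sigma^2/G$.

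\textbf{EBPO MSE.} Write the estimation error as
\[
V^{\text{EBPO}} - \theta_q = (1-\mathcal{S})\varepsilon + \mathcal{S}(\mu_{\text{glob}} - \theta_q).
\]
Conditional on $\theta_q$, the variable $\varepsilon$ has mean zero, so the cross term vanishes after taking expectations. This leaves
\[
\mathrm{MSE}(V^{\text{EBPO}}) = (1-\mathcal{S})^2 \frac{\sigma^2}{G} + \mathcal{S}^2 \tau^2 .
\]
Substitute $\mathcal{S} = \frac{\sigma^2/G}{\sigma^2/G + \tau^2}$ and set $a = \sigma^2/G$, $b = \tau^2$. Then
\[
\mathrm{MSE}(V^{\text{EBPO}}) = \frac{b^2 a + a^2 b}{(a+b)^2} = \frac{ab}{a+b} = a\,(1-\mathcal{S}).
\]
This is strictly less than $a$ exactly when $\mathcal{S} > 0$, that is, when $\sigma^2 > 0$ and $\tau^2 < \infty$. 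The setup stipulates $\mathcal{S} \in (0,1]$, and whenever $\tau^2 > 0$ the MSE is also strictly positive. An alternative framing is to note that $(1-\mathcal{S})^2 a + \mathcal{S}^2 b$ is a convex quadratic in $\mathcal{S}$ that equals $a$ at $\mathcal{S}=0$, has slope $-2a<0$ there, and is minimized at exactly the given $\mathcal{S}$; this exhibits $V^{\text{EBPO}}$ as the Bayes-optimal linear shrinkage.

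\textbf{Main obstacle.} The one delicate point is Remark~\ref{remark:between_group_var}. In practice $\tau^2$ is estimated by $\mathrm{Var}(\bar r_m)$, which inflates it to $\tau^2_{\text{latent}} + a$, so the shrinkage $\mathcal{S}'$ is not the optimal one. I would handle this with the convex-quadratic framing. The MSE in $\mathcal{S}$, evaluated with the true prior variance, lies below $a$ for every $\mathcal{S} \in (0, 2\mathcal{S}^*)$, where $\mathcal{S}^*$ is the optimal shrinkage. The conservative factor satisfies $0 < \mathcal{S}' \le \mathcal{S}^*$, so it lies in that interval and strict improvement still holds. Randomness in the plug-in estimates would be ignored, treating them as converged online statistics.
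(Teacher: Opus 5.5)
Your proof is correct and follows the paper's route. The paper likewise evaluates the Bayes risk of the linear shrinkage rule at the optimal weight $\mathcal{S}$ to get $(1-\mathcal{S})\sigma^2/G < \sigma^2/G$; you make explicit the bias--variance computation $(1-\mathcal{S})^2 a + \mathcal{S}^2 b$ that the paper cites as a ``standard Bayesian derivation.'' You also go slightly beyond it: you correctly tie the strict inequality to $\mathcal{S}>0$ (i.e.\ $\sigma^2>0$) rather than to $\tau^2>0$ as the paper states, and your $(0,2\mathcal{S}^*)$ argument extends the result to the conservative plug-in shrinkage of Remark~\ref{remark:between_group_var} — valid as long as $\sigma^2$ is the true within-group variance, not the pooled reward variance that Algorithm~\ref{alg:EBPO} actually uses.
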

\begin{remark} Unlike the unbiased sample mean in GRPO, EBPO introduces a bias towards the global prior $\mu_{glob}$ to minimize the MSE, a standard trade-off in shrinkage estimation. The global statistics $\mu_{glob}$ and $\tau^2$ are updated online using the current batch (Algorithm~\ref{alg:EBPO}), with the bias introduced by this correlation decays at a rate of $O(1/T)$, where $T$ is the total number of accumulated training steps. So for large $T$, the global statistics act as fixed priors relative to the local group updates, keeping the policy gradient asymptotically consistent.
\end{remark}
\begin{corollary}[Global Context Sensitivity]
\label{thm:global_context}
The penalty assigned by EBPO for a failure ($r_i=0$) in a saturated group scales dynamically with the global task difficulty $\mu_{\text{glob}}$. Specifically, failures on globally ``easy" tasks (high $\mu_{\text{glob}}$) incur larger penalties than failures on globally ``hard" tasks (low $\mu_{\text{glob}}$) (as visualized in Figure \ref{fig:penalty_scaling}).
\end{corollary}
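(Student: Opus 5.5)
In a saturated failure group every reward is zero, so $\mu_{\text{group}} = 0$. Substituting this into the definition of $V^{\text{EBPO}}$ collapses the baseline to
\[
V^{\text{EBPO}} = (1-\mathcal{S})\cdot 0 + \mathcal{S}\,\mu_{\text{glob}} = \mathcal{S}\,\mu_{\text{glob}}.
\]
The raw advantage of every response in the group is therefore
\[
\hat{A}^{raw}_{\text{EBPO}}(o_i) = 0 - \mathcal{S}\,\mu_{\text{glob}} = -\mathcal{S}\,\mu_{\text{glob}}.
\]
The whole corollary reduces to studying how this single expression depends on $\mu_{\text{glob}}$. As in the proof of Theorem~\ref{thm:non_vanishing}, it is strictly negative whenever $\mathcal{S}\in(0,1]$ and $\mu_{\text{glob}}>0$. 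So the penalty magnitude is $P(\mu_{\text{glob}}) := \mathcal{S}\,\mu_{\text{glob}}$, and we need to show $P$ is strictly increasing in $\mu_{\text{glob}}$.

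The key step is to treat $\mathcal{S}$ as fixed while $\mu_{\text{glob}}$ varies. We compare two tasks, or two training regimes, with the same within-group and between-group variance estimates $\sigma^2$ and $\tau^2$ and the same group size $G$. Then $\mathcal{S} = \frac{\sigma^2/G}{\sigma^2/G+\tau^2}$ does not depend on $\mu_{\text{glob}}$. This gives $\partial P/\partial \mu_{\text{glob}} = \mathcal{S} > 0$: the penalty is linear in $\mu_{\text{glob}}$ with positive slope. Consequently, for $\mu_{\text{glob}}^{\text{easy}} > \mu_{\text{glob}}^{\text{hard}}$,
\[
\bigl|\hat{A}^{raw}_{\text{EBPO}}\bigr|_{\text{easy}} - \bigl|\hat{A}^{raw}_{\text{EBPO}}\bigr|_{\text{hard}} = \mathcal{S}\bigl(\mu_{\text{glob}}^{\text{easy}} - \mu_{\text{glob}}^{\text{hard}}\bigr) > 0,
\]
which is the claim.

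Next we carry this ordering through the batch normalization. The final advantage is $(\hat{A}^{raw} - \mu_{\hat{A}})/(\sigma_{\hat{A}}+\epsilon)$. This is an affine map with positive slope $1/(\sigma_{\hat{A}}+\epsilon)$, assuming the batch statistics are shared or held fixed, with $\sigma_{\hat{A}}>0$ as in Theorem~\ref{thm:non_vanishing}. Such a map preserves the ordering of raw advantages. So a more negative raw advantage stays more negative after normalization, and the penalty is larger in the normalized sense as well.

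The main obstacle is justifying that $\mathcal{S}$, and the batch statistics $\mu_{\hat{A}}$ and $\sigma_{\hat{A}}$, can be held fixed while $\mu_{\text{glob}}$ changes. All of these are running quantities that in principle co-evolve with $\mu_{\text{glob}}$. I would handle this by stating the corollary as a ceteris paribus comparison, which matches the "scales dynamically" phrasing and the visualization in Figure~\ref{fig:penalty_scaling}. I would also remark that Welford updates make these statistics slowly varying, consistent with the $O(1/T)$ remark after Theorem~\ref{thm:mse_stability}, so the local comparison is meaningful. Finally, I would contrast with GRPO, whose advantage is identically $0$ in this regime regardless of $\mu_{\text{glob}}$ (Theorem~\ref{thm:non_vanishing}, part 1), and so carries no sensitivity to global context.
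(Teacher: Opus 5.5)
Your reduction to the raw advantage $-\mathcal{S}\mu_{\text{glob}}$ and your monotonicity argument are correct, and they are essentially the paper's proof. The paper also obtains $-\mathcal{S}\mu_{\text{glob}}$ from Theorem~\ref{thm:non_vanishing} and contrasts an easy regime ($\mu_{\text{glob}}\approx 1$) with a hard one ($0<\mu_{\text{glob}}\ll 1$), tacitly with the same $\mathcal{S}$; your linear version, with the ceteris paribus made explicit, is cleaner. One refinement: what must be held fixed is only $\mathcal{S} = 1/(1+G\tau^2/\sigma^2)$, i.e.\ the ratio $\tau^2/\sigma^2$, not $\sigma^2$ and $\tau^2$ separately. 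For binary rewards the variance of $\mathcal{W}_{\text{glob}}$ is essentially $\mu_{\text{glob}}(1-\mu_{\text{glob}})$, so $\sigma^2$ cannot stay fixed while $\mu_{\text{glob}}$ moves, except between mirror-image values.

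The paragraph carrying the ordering through batch normalization is wrong, because the ceteris paribus you invoke there is not available. $\mu_{\hat A}$ and $\sigma_{\hat A}$ are not Welford running quantities; they are recomputed from the current batch. Since $\mathcal{S}$ is the same for every prompt in the batch, $\mu_{\hat A} = \mathcal{S}(\bar r_{\text{batch}} - \mu_{\text{glob}})$, where $\bar r_{\text{batch}}$ is the mean reward of the current batch. Substituting gives
\[
\hat A_{m,i} = \frac{r_{m,i} - (1-\mathcal{S})\bar r_m - \mathcal{S}\,\bar r_{\text{batch}}}{\sigma_{\hat A}+\epsilon},
\]
and $\sigma_{\hat A}$, being the spread of these centered numerators, does not involve $\mu_{\text{glob}}$ either. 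For a saturated failure group the final advantage is therefore $-\mathcal{S}\,\bar r_{\text{batch}}/(\sigma_{\hat A}+\epsilon)$: the common shift $-\mathcal{S}\mu_{\text{glob}}$ is removed exactly by the centering. So for a fixed batch and fixed $\mathcal{S}$, the normalized penalty is constant in $\mu_{\text{glob}}$. Holding $\mu_{\hat A}$ fixed while moving $\mu_{\text{glob}}$ silently moves $\bar r_{\text{batch}}$ by the same amount, and it is $\bar r_{\text{batch}}$, not $\mu_{\text{glob}}$, that produces the ordering you report. The ``shared statistics'' reading does not rescue this, since groups in the same batch also share the same $\mu_{\text{glob}}$. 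Drop that paragraph and state the corollary for the raw advantage, which is all the paper proves and what Figure~\ref{fig:penalty_scaling} plots. Alternatively, say explicitly that after normalization the penalty tracks the current batch success rate rather than $\mu_{\text{glob}}$.
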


\begin{figure}[bt]
    \centering
    \includegraphics[width=0.8\linewidth]{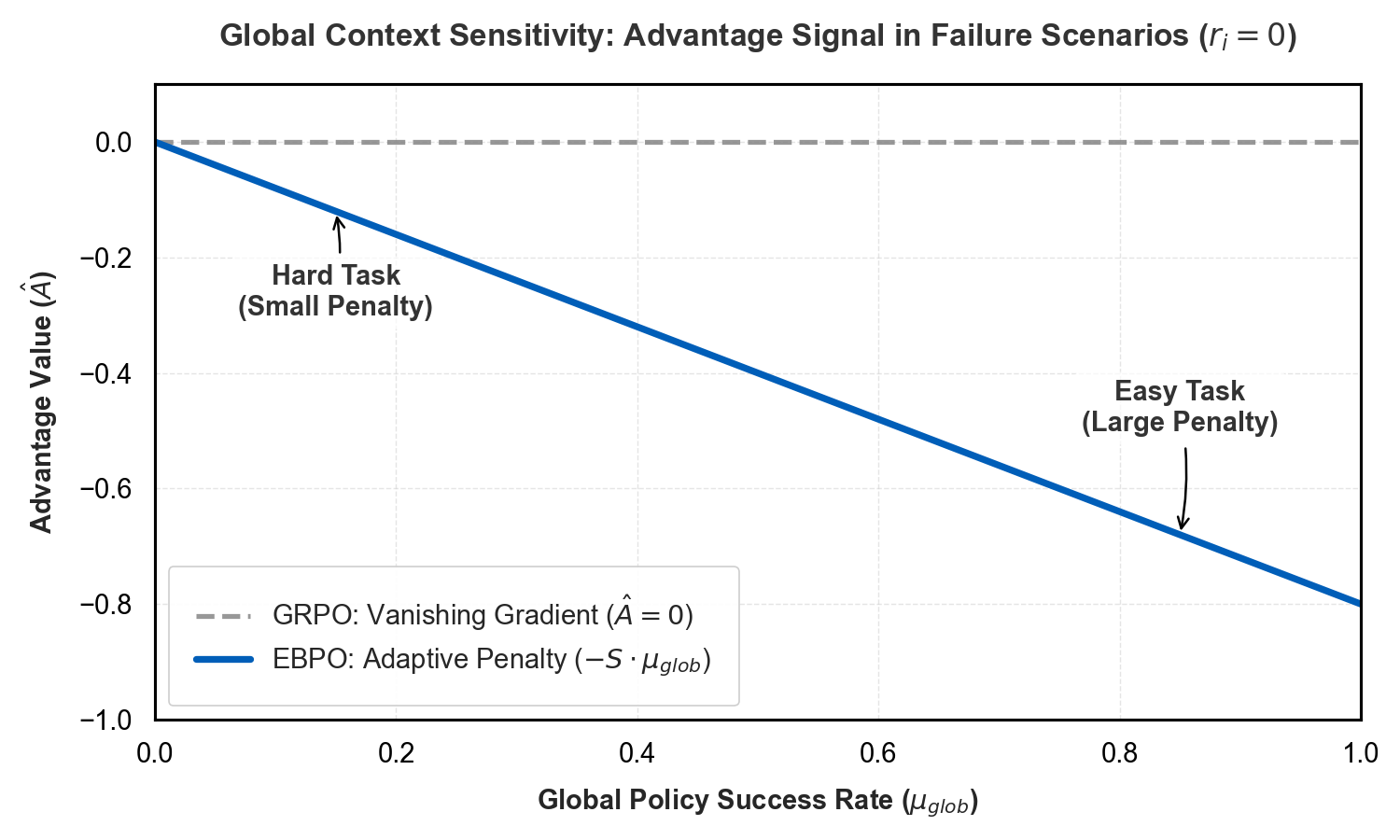}
    \caption{\textbf{Advantage Signal in Failure Scenarios ($r_i=0$ for all $i$).} 
    Comparison of the advantage signal of GRPO (dashed line) versus EBPO (solid blue line) when the model fails all attempts for a given prompt. 
    While GRPO yields a vanishing gradient ($\hat{A}=0$) regardless of task difficulty, EBPO provides a dynamic penalty signal $-\mathcal{S}\mu_{\text{glob}}$ that scales with the global success rate ($\mu_{\text{glob}}$).}
    \label{fig:penalty_scaling}
\end{figure}



We next analyze the exploration dynamics. Recent work by \citet{cui2025entropy} establishes that entropy collapse in reasoning models is driven by the covariance between policy likelihood and advantage. We posit that EBPO implicitly regularizes this decay. We note that our derivation relies on the approximation framework from \citet{cui2025entropy}.

\begin{proposition}[Entropy Conservation via Shrinkage-Regularized Covariance]
\label{thm:entropy_conservation}
Let $\Delta H(\pi) = H(\pi_t) - H(\pi_{t+1})$ denote the reduction in policy entropy after a single gradient descent step with learning rate $\eta$. 
Under the assumption that the magnitude of the likelihood-reward covariance $\Omega_q$ is independent of the local group scale $\sigma_q$, the entropy reduction of EBPO is strictly bounded by that of GRPO:
\begin{equation*}
    \mathbb{E}[\Delta H(\pi)_{\text{EBPO}}] < \mathbb{E}[\Delta H(\pi)_{\text{GRPO}}]
\end{equation*}
provided the task distribution exhibits non-zero between-group variance (i.e., $\text{Var}_q(\mu_q) > 0$), ensuring that the global normalization constant effectively dampens updates for low-variance groups.
\end{proposition}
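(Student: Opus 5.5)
We follow the first-order entropy dynamics of \citet{cui2025entropy}. For softmax policies updated by natural or vanilla policy gradient with step size $\eta$, their result gives, per prompt $q$,
\begin{equation*}
    H(\pi_t) - H(\pi_{t+1}) \approx \eta\, \mathrm{Cov}_{o\sim\pi_t(\cdot\mid q)}\bigl(\log \pi_t(o\mid q),\, \hat{A}(o)\bigr).
\end{equation*}
The plan is to write this covariance, for each method, as a method-dependent scalar times a common quantity, and then compare the scalars in expectation over prompts.

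\textbf{Factorization.} For any baseline $V_q$ that is constant within the group, adding a constant to the advantage leaves the covariance unchanged. Hence $\mathrm{Cov}(\log\pi, r - V_q) = \mathrm{Cov}(\log\pi, r) =: \Omega_q$, for both GRPO and EBPO. The methods therefore differ only in the normalizing scale.
\begin{itemize}
    \item \textbf{GRPO} divides by the local group standard deviation, giving $\Delta H_{\text{GRPO}} \approx \eta\, \Omega_q / \sigma_q$.
    \item \textbf{EBPO} subtracts the batch mean $\mu_{\hat A}$, which is also constant within the group, and divides by the global scale $\sigma_{\hat A}$. This gives $\Delta H_{\text{EBPO}} \approx \eta\, \Omega_q / \sigma_{\hat A}$.
\end{itemize}
We take $\epsilon \to 0$ throughout.

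\textbf{Taking expectations.} Under the stated assumption that $\Omega_q$ is independent of $\sigma_q$, we get
\begin{equation*}
    \mathbb{E}_q[\Delta H_{\text{GRPO}}] = \eta\, \mathbb{E}[\Omega_q]\, \mathbb{E}[1/\sigma_q], \qquad \mathbb{E}_q[\Delta H_{\text{EBPO}}] = \eta\, \mathbb{E}[\Omega_q]\,/\,\sigma_{\hat A}.
\end{equation*}
Since entropy decreases in practice, we assume $\mathbb{E}[\Omega_q] > 0$. The claim then reduces to showing $1/\sigma_{\hat A} < \mathbb{E}[1/\sigma_q]$. We prove this in two steps.

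First, we bound $\sigma_{\hat A}$ from below. By the law of total variance over the batch, $\sigma_{\hat A}^2$ equals the within-group part $\mathbb{E}_q[\sigma_q^2]$ plus the between-group variance of the group means of $r - V^{EB}_q$. That group mean is $(1 - \mathcal{S}_q)\mu_q$ up to a common constant. When $\mathrm{Var}_q(\mu_q) > 0$ and $\mathcal{S}_q < 1$, this between-group term is strictly positive, so $\sigma_{\hat A}^2 > \mathbb{E}_q[\sigma_q^2]$.

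Second, we apply Jensen's inequality twice. Because $x \mapsto 1/x$ is convex, $\mathbb{E}[1/\sigma_q] \ge 1/\mathbb{E}[\sigma_q]$. Because $\mathbb{E}[\sigma_q] \le \sqrt{\mathbb{E}[\sigma_q^2]}$, we get $1/\mathbb{E}[\sigma_q] \ge 1/\sqrt{\mathbb{E}[\sigma_q^2]}$. Combining these with the first step gives
\begin{equation*}
    \mathbb{E}[1/\sigma_q] \;\ge\; 1/\sqrt{\mathbb{E}[\sigma_q^2]} \;>\; 1/\sigma_{\hat A}.
\end{equation*}
The strictness comes from the between-group variance term. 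This matches the proposition's phrase ``dampens updates for low-variance groups'': small $\sigma_q$ inflates GRPO's $1/\sigma_q$, whereas EBPO uses the pooled scale.

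\textbf{Main obstacle.} The delicate step is the degenerate case. Groups with $\sigma_q = 0$ make GRPO's ratio undefined, and there $\Omega_q = 0$ as well. We will restrict the expectation to non-degenerate groups, or keep $\epsilon > 0$ and note that the inequalities persist. A second subtlety is justifying the sign $\mathbb{E}[\Omega_q] > 0$ and the exact strictness condition $\mathcal{S}_q < 1$ on a set of positive measure. If this fails, we fall back on the Jensen strictness, which holds whenever $\sigma_q$ is non-constant across prompts.
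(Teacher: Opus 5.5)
Your route is the paper's. You use the Cui et al.\ covariance approximation and translation invariance to strip $V^{EB}_q$ and $\mu_{\hat A}$. You factor $\Omega_q$ out via independence, get $\sigma_{\hat A}^2 > \mathbb{E}_q[\sigma_q^2]$ from the law of total variance, and finish with $\sqrt{\mathbb{E}[\sigma_q^2]} \ge \mathbb{E}[\sigma_q]$ and Jensen for $1/x$. Stating $\mathbb{E}[\Omega_q] > 0$ explicitly is a real improvement: the paper's final line needs it but never says so.

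Your correction of the between-group term, however, has the coefficient backwards. The group mean of the raw EBPO advantage is
\[
\mu_q - V^{EB}_q = \mu_q - (1-\mathcal{S})\mu_q - \mathcal{S}\mu_{\text{glob}} = \mathcal{S}\,(\mu_q - \mu_{\text{glob}}),
\]
with $\mathcal{S}$ common to all prompts. So the between-group term is $\mathcal{S}^2\,\mathrm{Var}_q(\mu_q)$, and strictness requires $\mathcal{S} > 0$, which the setup supplies via $\mathcal{S} \in (0,1]$. Your condition $\mathcal{S} < 1$ is the wrong one. At $\mathcal{S} = 0$ the EBPO centering coincides with GRPO's and the term vanishes. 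At $\mathcal{S} = 1$, which your condition excludes, the term is largest. (The paper writes the term as $\mathrm{Var}_q(\mu_q)$, i.e.\ it silently takes $\mathcal{S} = 1$, so you were right that it needs adjusting.)

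Your treatment of degenerate groups is also wrong as stated. Restricting the expectation to $\sigma_q > 0$ does not let the inequalities persist. The reason is that $\sigma_{\hat A}$ is still computed over the whole batch, where saturated groups contribute zero within-group variance. Suppose half the groups are all-zero and half have $\mu_q = \sigma_q = 1/2$. Then $\sigma_{\hat A}^2 = 1/8 + \mathcal{S}^2/16 < 1/4$, so $1/\sigma_{\hat A} > 2 = \mathbb{E}[1/\sigma_q \mid \sigma_q > 0]$. Since only the non-degenerate groups have $\Omega_q \neq 0$, EBPO's expected entropy drop is then the larger one. Keeping $\epsilon > 0$ does not help either, because the factorization would credit GRPO with weight $1/\epsilon$ on groups whose true contribution is zero.

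The right move is to notice that your hypotheses already exclude this case. As you note, $\Omega_q = 0$ whenever $\sigma_q = 0$, so independence gives $P(\Omega_q \neq 0)\,P(\sigma_q = 0) = P(\Omega_q \neq 0,\ \sigma_q = 0) = 0$. Since $\mathbb{E}[\Omega_q] > 0$ forces $P(\Omega_q \neq 0) > 0$, it follows that $P(\sigma_q = 0) = 0$. State that degenerate groups are ruled out by the assumption, as the paper does tacitly, rather than claiming the bound survives them; it does not.
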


\begin{remark}
    While the independence assumption may not strictly hold in all regimes (e.g., highly saturated groups might exhibit lower covariance), it captures the dominant dynamics of entropy decay. We empirically validate its practical hold in Section~\ref{sec:results} (Figure~\ref{fig:entropy_curve}).
\end{remark}

\subsubsection{Optimizing Prior Estimation via Clustered Sampling}
\label{sec:clustering}
While EBPO stabilizes the baseline by leveraging global history, a potential limitation arises from the heterogeneity of the training data. In a standard random shuffle, the online estimator for $\mu_{\text{glob}}$ aggregates performance across vastly different domains (e.g., mixing simple arithmetic with complex calculus) or different difficulties (e.g., mixing IMO problems with AMC 8 problems). This high variance in the data stream can cause the global prior to converge to a coarse average that creates a ``mismatch" for specific, distinct tasks.

To address this, we propose organizing the training stream into coherent clusters. We explore two strategies:
\begin{enumerate}
    \item \textbf{Topic Clustering:} Grouping prompts by mathematical domain (e.g., Algebra, Geometry).
    \item \textbf{Difficulty Clustering (Curriculum):} Ordering prompts by success rate from easy to hard.
\end{enumerate}
By presenting data in clustered sequences, the online estimator adapts to consistent local distributions rather than oscillating between extremes. We formally justify this approach with the following proposition, which shows that clustered sampling reduces the estimation error of the prior.

\begin{proposition}[Advantage of Clustered Sampling]
\label{prop:clustering}
Let the dataset $\mathcal{D}$ be partitioned into $K$ distinct clusters (topics) $\{\mathcal{C}_1, \dots, \mathcal{C}_K\}$. Let $\theta_q$ be the latent success rate of a prompt $q$. Assume the true difficulty varies by topic, such that $\mu_k = \mathbb{E}[\theta_q \mid q \in \mathcal{C}_k]$ varies across $k$, while the global mean is $\mu_{\text{glob}} = \mathbb{E}_k[\mu_k]$.

Let $\hat{\mu}_{\text{prior}}$ be the global statistic used by EBPO. We define two streaming regimes:
\begin{itemize}
    \item \textbf{Random Shuffle:} The stream is sampled uniformly from $\mathcal{D}$, such that $\hat{\mu}_{\text{prior}}$ converges to $\mu_{\text{glob}}$.
    \item \textbf{Topic-Coherent:} The stream is ordered by cluster. For a prompt $q \in \mathcal{C}_k$, the estimator has adapted such that $\hat{\mu}_{\text{prior}} = \mu_k$.
\end{itemize}
The Mean Squared Error (MSE) of the prior estimate w.r.t the true difficulty $\theta_q$ is strictly minimized in the Topic-Coherent regime.
\end{proposition}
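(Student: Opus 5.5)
The plan is to compare the two regimes through a single bias--variance (law of total variance) decomposition of the prior's error. For a prompt $q$ drawn from $\mathcal{D}$, let $k(q)$ be its cluster index. The quantity to compare is $\mathrm{MSE}(\hat{\mu}_{\text{prior}}) = \mathbb{E}_q[(\theta_q - \hat{\mu}_{\text{prior}})^2]$. In the Random Shuffle regime $\hat{\mu}_{\text{prior}} = \mu_{\text{glob}}$ is constant. In the Topic-Coherent regime $\hat{\mu}_{\text{prior}} = \mu_{k(q)}$, which is the conditional mean $\mathbb{E}[\theta_q \mid k(q)]$. Throughout, $\mathbb{E}_k$ is taken with respect to the cluster proportions induced by sampling $q$ from $\mathcal{D}$, so that $\mu_{\text{glob}} = \mathbb{E}_k[\mu_k] = \mathbb{E}_q[\theta_q]$.

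First, I will expand the Random Shuffle error by adding and subtracting $\mu_{k(q)}$:
\begin{equation*}
\mathbb{E}_q[(\theta_q-\mu_{\text{glob}})^2] = \mathbb{E}_q[(\theta_q-\mu_{k(q)})^2] + \mathbb{E}_k[(\mu_k-\mu_{\text{glob}})^2] + 2\,\mathbb{E}_q[(\theta_q-\mu_{k(q)})(\mu_{k(q)}-\mu_{\text{glob}})].
\end{equation*}
Next, I will dispose of the cross term by conditioning on $k$. The factor $\mu_k - \mu_{\text{glob}}$ is fixed within a cluster, and $\mathbb{E}[\theta_q - \mu_k \mid q\in\mathcal{C}_k] = 0$ by the definition of $\mu_k$. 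Hence the cross term vanishes.

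This gives
\begin{equation*}
\mathrm{MSE}_{\text{shuffle}} = \mathrm{MSE}_{\text{coherent}} + \mathrm{Var}_k(\mu_k),
\end{equation*}
which is exactly the within/between split of the law of total variance. The final step is strictness. The hypothesis that $\mu_k$ ``varies across $k$'' means at least two clusters with positive mass have distinct means, so $\mathrm{Var}_k(\mu_k) > 0$. The same argument shows more generally that any $\sigma(k)$-measurable prior $f(k)$ has MSE $\mathrm{MSE}_{\text{coherent}} + \mathbb{E}_k[(\mu_k - f(k))^2]$. The conditional mean is therefore the unique minimizer among cluster-dependent priors, which is the sense in which the error is ``strictly minimized.''

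The main obstacle is not the algebra but making the idealization precise. The statement treats the online estimator as having exactly converged, to $\mu_{\text{glob}}$ in one regime and to $\mu_k$ in the other. I would therefore state explicitly that the comparison is between these limiting priors, and that the weights in $\mathbb{E}_k$ must be the cluster frequencies under $\mathcal{D}$ for the identity $\mu_{\text{glob}} = \mathbb{E}_q[\theta_q]$ to hold. I would also note in a remark that if the Welford estimate has only partially adapted, say $\hat{\mu}_{\text{prior}} = \mu_k + b_k$, the coherent MSE becomes $\mathrm{MSE}_{\text{coherent}} + \mathbb{E}_k[b_k^2]$. The advantage over shuffling then persists whenever $\mathbb{E}_k[b_k^2] < \mathrm{Var}_k(\mu_k)$.
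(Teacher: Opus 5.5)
Your proof is correct and follows essentially the same route as the paper: both rest on the law of total variance, which gives $J_{\text{shuffle}} = J_{\text{coherent}} + \mathrm{Var}_k(\mu_k)$ with strictness from $\mathrm{Var}_k(\mu_k) > 0$, except that you derive the identity by hand (adding and subtracting $\mu_{k(q)}$ and killing the cross term) where the paper cites it. Your point that $\mathbb{E}_k$ must use the cluster frequencies under $\mathcal{D}$ makes explicit an assumption the paper uses silently when it writes $J_{\text{shuffle}} = \mathrm{Var}(\theta_q)$; by your own $f(k)$ generalization, strictness survives even without that assumption, since any constant prior $c$ incurs the extra term $\mathbb{E}_k[(\mu_k - c)^2] > 0$.
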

\section{Experiments}
\label{sec:experiments}

We first outline the experimental setup. Then, we compare the proposed \ours method with GRPO and its variants under different experimental settings.

\subsection{Experimental Setup}
\label{sec:setup}

\paragraph{Dataset.}
We use the DAPO-Math-17K dataset as the training corpus \citep{yu2025dapo}. For evaluation, we select a suite of competitive math reasoning benchmarks, including AIME2024 \citep{li2024numinamath}, AIME2025 \citep{li2024numinamath}, AMC23 \citep{li2024numinamath}, Math-500 \citep{hendrycks2021measuring}, and OlympiadBench \citep{he2024olympiadbench}. Each model is evaluated using the Pass@1 metric. To ensure statistical robustness, we repeat each evaluation set 32 times with different random seeds and report the average Pass@1 across runs.

\paragraph{Models.} 
We adopt a diverse set of LLMs with different architectures and parameter scales. Specifically, we use \textbf{LLaMA3.1-8B} \citep{dubey2024llama}, \textbf{Qwen3-8B}, and \textbf{Qwen3-14B} \citep{yang2025qwen3}. This selection spans multiple LLM families (LLaMA and Qwen) and model sizes, providing a robust and comprehensive testbed for evaluating our agent workflow and training recipe.

\paragraph{Baseline.}
We compare our proposed \textbf{\ours} method with several representative baselines, including Naive GRPO \citep{shao2024deepseekmath}, DAPO \citep{yu2025dapo}, Dr. GRPO \citep{liu2025understanding}, and EntropyMech \citep{cui2025entropy}. 
For a fair comparison, all methods are trained under identical conditions: the same training data ordering, batch size, and optimization configuration are applied across all baselines and our \ours method. Specific configuration details for each baseline are provided in Appendix~\ref{sec:hyper_details}.

\paragraph{Clustered Sampling Details.} 
To implement the clustered sampling optimization in Section \ref{sec:clustering}, we evaluate two sampling strategies: \textit{clustering by topic} and \textit{clustering by difficulty}, which correspond to \textbf{\ours-topic} and \textbf{\ours-diff}, respectively. 
For difficulty-based clustering, we use the base model's pass rate as the difficulty metric, grouping problems of similar success rates together. 
For topic-based clustering, we employ GPT-4.1 \citep{achiam2023gpt} to annotate each problem with one topic label. 
Details are provided in Appendix~\ref{sec:hyper_details}.

\subsection{Results} \label{sec:results}

\begin{table*}[t]
\centering
\begin{tabular}{lccccc|c}
\toprule
\textbf{Method} & \textbf{MATH-500} & \textbf{AIME-2024} & \textbf{AIME-2025} & \textbf{AMC23} & \textbf{OlympiadBench} & \textbf{Average} \\
\midrule
\multicolumn{7}{l}{\textbf{Qwen3-8B}} \\
\midrule
\ours-topic & \textbf{76.80} & \textbf{56.04} & \textbf{47.92} & 86.25 & \textbf{54.93} & \textbf{64.39} \\
GRPO & 65.60 & 50.21 & 42.29 & \textbf{89.53} & 45.99 & 58.72 \\
Dr~GRPO & 67.68 & 51.04 & 32.71 & 85.00 & 44.91 & 56.67 \\
DAPO & 58.39 & 45.63 & 32.71 & 82.81 & 43.62 & 52.63 \\
EntropyMech & 53.88 & 37.92 & 30.42 & 79.99 & 43.69 & 49.18 \\
\midrule
\multicolumn{7}{l}{\textbf{LLaMA3-8B}} \\
\midrule
\ours-topic & \textbf{23.84} & 2.29 & 0.21 & \textbf{15.78} & \textbf{7.75} & \textbf{9.97} \\
GRPO & 22.66 & 2.71 & 0.00 & 12.66 & 6.97 & 9.00 \\
Dr~GRPO & 22.12 & \textbf{2.91} & \textbf{0.42} & 12.36 & 6.65 & 8.89 \\
DAPO & 18.05 & 2.29 & 0.00 & 11.09 & 6.05 & 7.90 \\
EntropyMech & 15.01 & 1.25 & 0.21 & 10.16 & 5.30 & 6.79 \\
\midrule
\multicolumn{7}{l}{\textbf{Qwen3-14B}} \\
\midrule
\ours-topic & 71.01 & \textbf{58.13} & 45.63 & \textbf{85.47} & 48.52 & \textbf{61.35} \\
GRPO & \textbf{75.73} & 49.99 & 37.92 & 85.00 & 50.77 & 59.88 \\
Dr~GRPO & 73.63 & 53.33 & 39.79 & 82.97 & \textbf{51.08} & 60.16 \\
DAPO & 67.68 & 57.29 & \textbf{51.46} & 84.22 & 46.03 & 61.34 \\
EntropyMech & 59.94 & 49.37 & 40.83 & 78.75 & 41.21 & 54.42 \\
\bottomrule
\end{tabular}
\caption{
Performance comparison of \ours and baseline methods across different evaluation datasets (group size = 4). 
The dataset is clustered by topics during training. Each value represents Pass@1 (\%). 
The best result within a base model group is highlighted in bold.
}
\label{tab:ebpo_results}
\end{table*}

\begin{figure*}[t]
    \centering
    \begin{subfigure}[t]{0.45\textwidth}
        \centering
        \includegraphics[width=\textwidth]{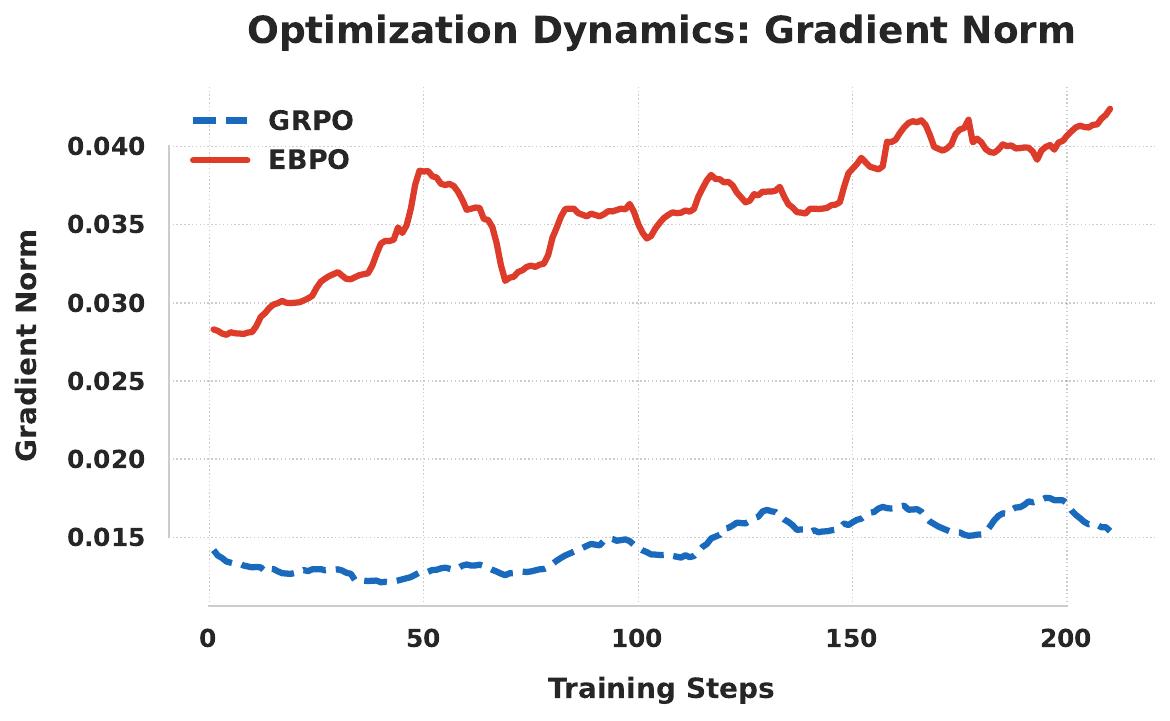}
        \caption{Evolution of Gradient Norm ($||\nabla_\theta J||_2$).}
        \label{fig:grad_norm}
    \end{subfigure}
    \hfill
    \begin{subfigure}[t]{0.45\textwidth}
        \centering
        \includegraphics[width=\textwidth]{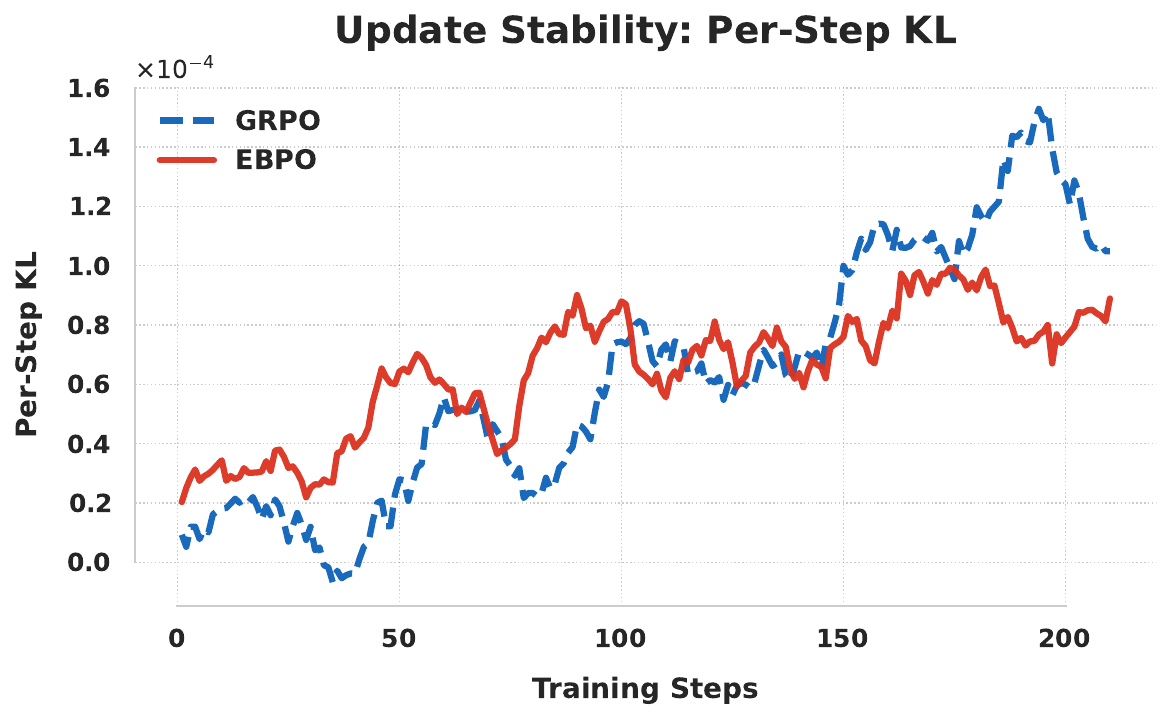}
        \caption{Evolution of Per-Step KL ($\text{KL}(\pi_t || \pi_{t+1})$).}
        \label{fig:per_step_kl}
    \end{subfigure}
    \caption{
    \textbf{Optimization stability analysis.} \textbf{(a)} \ours maintains healthy, non-vanishing gradients compared to GRPO's diminishing signals. \textbf{(b)} \ours prevents policy collapse by strictly bounding the update magnitude (per-step KL) throughout training.
    }
    \label{fig:train_dynamics}
\end{figure*}

\paragraph{\ours consistently outperforms GRPO and its variants across multiple reasoning benchmarks and model sizes.}

We report results obtained under the setting where the training data are clustered by topic and the group size of GRPO is 4, as summarized in Table~\ref{tab:ebpo_results}. 
Across all evaluated models and benchmarks, \textbf{\ours-topic} achieves clear and consistent improvements over prior value-free reinforcement learning baselines. 
For example, on the Qwen3-8B model, \ours reaches an average Pass@1 of \textbf{64.39\%}, outperforming GRPO by more than \textbf{5\%}. 
Similar trends are observed on other models, indicating that the benefits of \ours generalize across architectures and scales. Overall, among the 15 possible (model, dataset) combinations, \ours achieves the best result in 9 cases, demonstrating its robustness across both architectures and reasoning tasks.

These results highlight the advantage of \ours, which stabilizes gradient estimation across prompt groups and improves reward signal utilization. 
As a result, \ours produces stronger and more reliable reasoning performance compared to existing GRPO-based approaches.

\textbf{\ours achieves stable and non-vanishing policy optimization.} To analyze the training dynamics, we examine the magnitude of policy updates and gradient flow for the Qwen3-8B training. First, in Figure~\ref{fig:grad_norm}, we plot the gradient norm ($||\nabla_\theta J||_2$). Standard GRPO exhibits consistently lower gradient magnitudes, indicative of \textbf{vanishing gradients} in saturated regimes where all group responses fail and yield zero relative advantage. In contrast, EBPO maintains a robust gradient flow throughout training, confirming that the shrinkage-regularized baseline provides informative penalty signals even when local variance is zero. Second, we track the per-step KL divergence ($\text{KL}(\pi_t || \pi_{t+1})$) in Figure~\ref{fig:per_step_kl}. While GRPO shows signs of instability in later stages, with update sizes spiking unpredictably, EBPO maintains a strictly bounded update magnitude. These metrics demonstrate that EBPO strikes a critical balance: it prevents the policy from stalling due to vanishing signals while regularizing it against the catastrophic large-step deviations that lead to model collapse.


\textbf{\ours demonstrates stronger exploration behavior compared to GRPO.}
We explicitly verify the theoretical claims of Proposition~\ref{thm:entropy_conservation} by tracking the policy entropy in Figure~\ref{fig:entropy_curve}. \ours consistently maintains \textbf{higher entropy} during training compared to GRPO, suggesting that it encourages greater exploration and prevents premature policy collapse. This aligns with the algorithmic design of \ours: by incorporating a batch-level shrinkage baseline, the policy updates are better balanced between exploitation of high-reward samples and exploration of underrepresented regions in the solution space. This preservation of exploration capability has a direct impact on generalization performance. As shown in Figure~\ref{fig:val_performance}, we track the evolution of validation reward (Majority Vote@16 and Pass@16) throughout training. While GRPO plateaus due to mode collapse, EBPO maintains a continuous upward trajectory in both Majority Vote@16 and Pass@16, yielding a policy with higher robustness and peak capability. Furthermore, We also conduct an ablation study in Appendix~\ref{appendix:ablation_clustering} to investigate the impact of topic clustering on the efficacy of the \ours framework.

\begin{figure}[!htb]
\centering
    \includegraphics[width=0.5\textwidth]{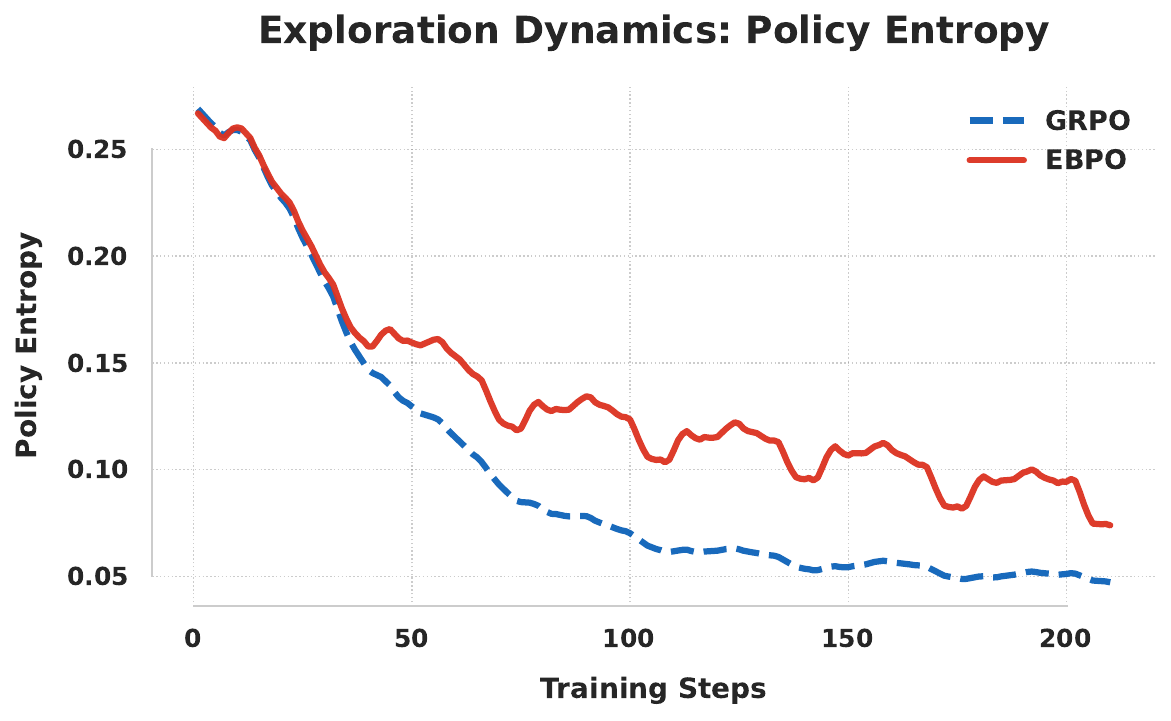}
    \caption{
    \textbf{Evolution of Policy Entropy ($G=4$).} \ours maintains a consistently higher policy entropy than GRPO, demonstrating that the global prior effectively sustains exploration throughout training.}
    \label{fig:entropy_curve}
\end{figure}

\begin{figure*}[t]
    \centering
    \begin{subfigure}[t]{0.45\textwidth}
        \centering
        \includegraphics[width=\textwidth]{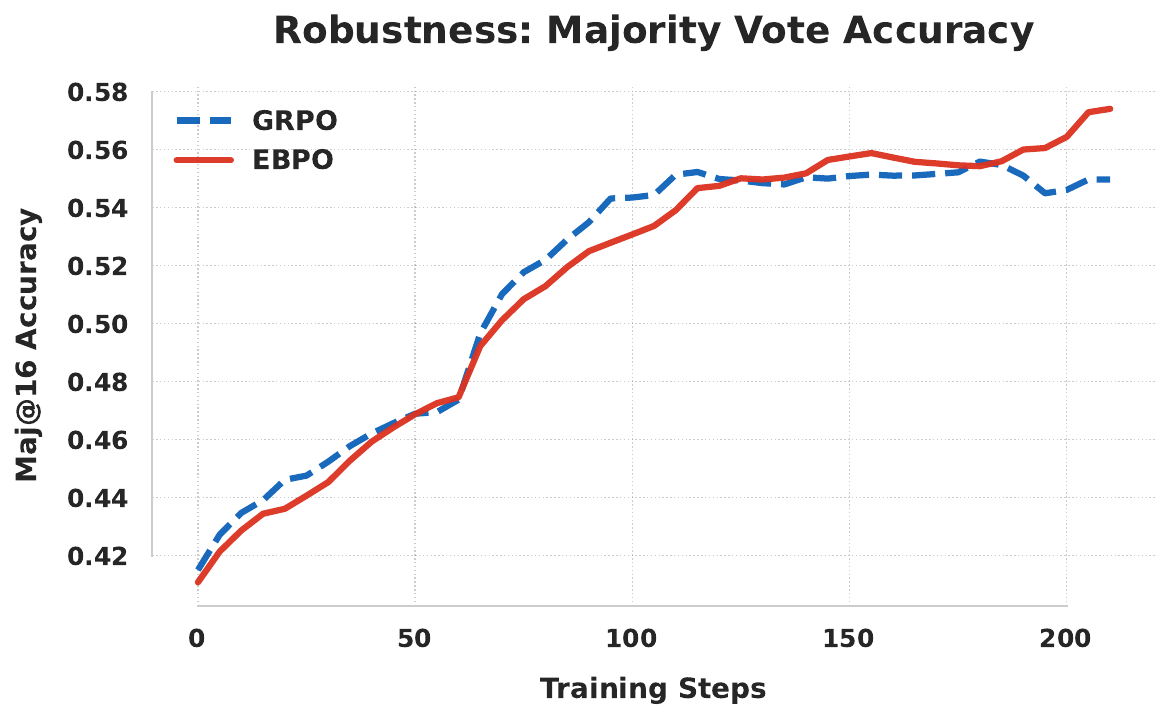}
        \label{fig:maj_vote}
    \end{subfigure}
    \hfill
    \begin{subfigure}[t]{0.45\textwidth}
        \centering
        \includegraphics[width=\textwidth]{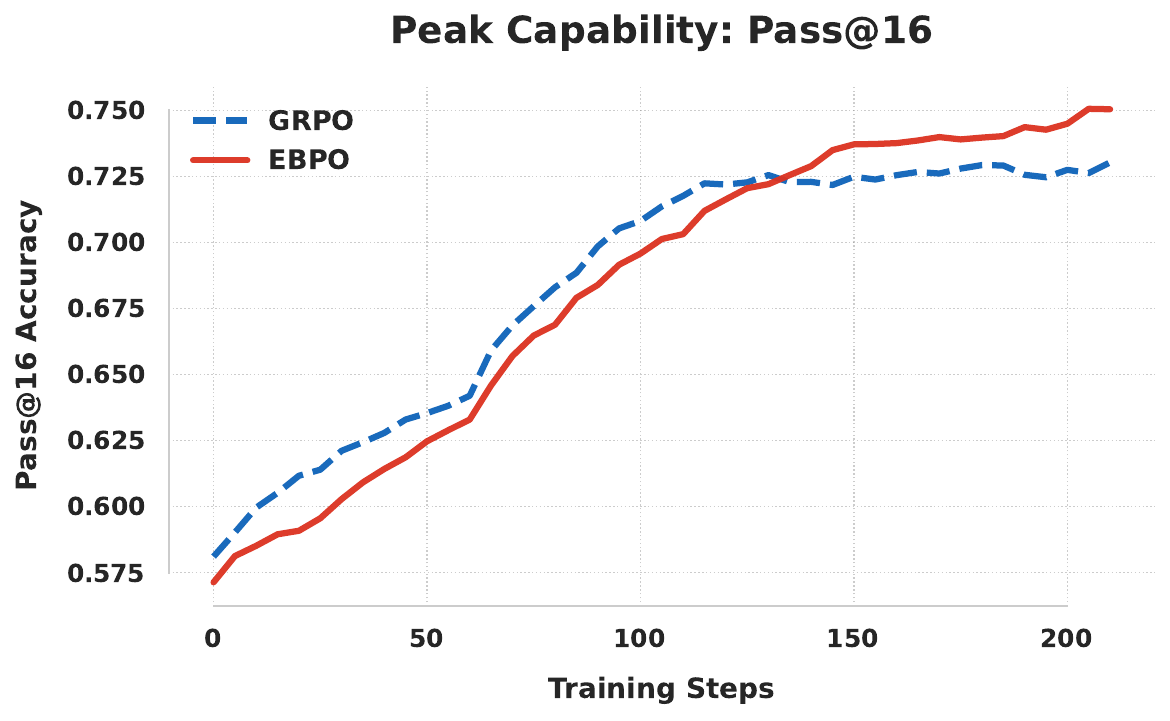}
        \label{fig:pass_16}
    \end{subfigure}
    \vspace{-1em}
    \caption{
    \textbf{Validation Performance.} While GRPO plateaus or degrades in late-stage training—indicative of overfitting or policy collapse—EBPO exhibits sustained performance gains.
    }
    \label{fig:val_performance}
\end{figure*}

In summary, EBPO stabilizes optimization by mitigating vanishing gradients and strictly bounding update magnitudes, effectively preventing both stagnation and model collapse. This algorithmic resilience, combined with sustained exploration via entropy conservation, drives the consistent performance gains observed across reasoning benchmarks.

\begin{table}[ht]
\centering
\begin{tabular}{lccccccc}
\toprule
\textbf{Method} & $G$ & \textbf{MATH} & \textbf{AIME24} & \textbf{AIME25} & \textbf{AMC23} & \textbf{Olympiad} & \textbf{Avg.} \\
\midrule
\ours-topic      & 8  & \textbf{74.26} & \textbf{64.37} & \textbf{48.96} & \textbf{89.69} & \textbf{50.44} & \textbf{65.54} \\
GRPO       & 8  & 68.79 & 47.29 & 31.46 & 79.69 & 44.06 & 54.26 \\
DAPO       & 8  & 56.63 & 35.42 & 28.33 & 77.34 & 35.75 & 46.69 \\
\midrule
\ours-topic      & 16 & 77.39 & \textbf{60.00} & \textbf{46.46} & \textbf{85.94} & \textbf{53.41} & \textbf{64.64} \\
GRPO       & 16 & \textbf{77.98} & 55.42 & 44.58 & 85.94 & 51.89 & 63.16 \\
DAPO       & 16 & 73.26 & 46.67 & 37.92 & 76.87 & 47.07 & 56.36 \\
\midrule
\ours-topic      & 32 & 72.24 & \textbf{61.04} & \textbf{44.79} & 85.47 & 48.52 & 62.41 \\
GRPO       & 32 & \textbf{76.94} & 59.37 & 40.00 & \textbf{86.88} & \textbf{51.34} & \textbf{62.91} \\
DAPO       & 32 & 72.71 & 46.67 & 35.21 & 83.13 & 46.77 & 56.90 \\
\bottomrule
\end{tabular}
\caption{Performance comparison across different group sizes. \ours consistently demonstrates superior sample efficiency.}
\label{tab:group_size_results}
\end{table}

\subsection{Sensitivity to Group Size}
\textbf{\ours consistently achieves superior sample efficiency, reaching high-tier performance at small group sizes.} The performance of group-based reinforcement learning methods is fundamentally tied to the group size $G$, which governs the fidelity of the advantage signal. Theoretically, standard GRPO computes advantage by normalizing rewards against the empirical mean and standard deviation of the group. In regimes where $G$ is small, this estimation is prone to high variance based on noise. \ours addresses this by grounding the advantage in topic-level evidence, which provides a more stable semantic anchor for policy updates even when few responses are sampled per prompt. To evaluate this stability, we compare \ours, GRPO, and DAPO across $G \in \{8, 16, 32\}$ on the Qwen3-8B model. The results are detailed in Table~\ref{tab:group_size_results}.

The scaling behavior across $G=4$ to $G=32$ reveals a critical trend: \ours is remarkably sample-efficient. While standard GRPO and DAPO show significant performance gaps at smaller group sizes (e.g., $G=8$), \ours achieves high-tier performance even with limited samples. Notably, at $G=8$, \ours outperforms GRPO by 11.28\% on average, suggesting that topic-based evidence gathering effectively ``densifies'' the sparse rewards found in smaller groups. 

As $G$ increases, the performance of the baselines improves as their empirical statistics become more reliable. For \ours, the most substantial gains are realized at lower $G$ values, effectively reaching a performance plateau earlier than the baselines. This indicates that \ours is well-suited for training environments with computational constraints, providing a robust training signal where traditional group-based normalization struggles with high variance.

\subsection{Curriculum Learning with Difficulty Clustering}
\textbf{Curriculum learning provides a more stable training signal for \ours, consistently outperforming GRPO.}

To evaluate the impact of data ordering, we train various models for one epoch on a dataset re-ranked from easy to hard difficulty with a group size $G=4$. This structured exposure, referred to as \ours-diff, aligns the complexity of the topic-level evidence clusters with the model's evolving capability. By starting with simpler samples, the model establishes a reliable logical foundation before being exposed to highly intricate reasoning chains. The comparative results are presented in Figure \ref{fig:curriculum_results}.


\begin{figure}[ht]
    \centering
    \includegraphics[width=0.8\linewidth]{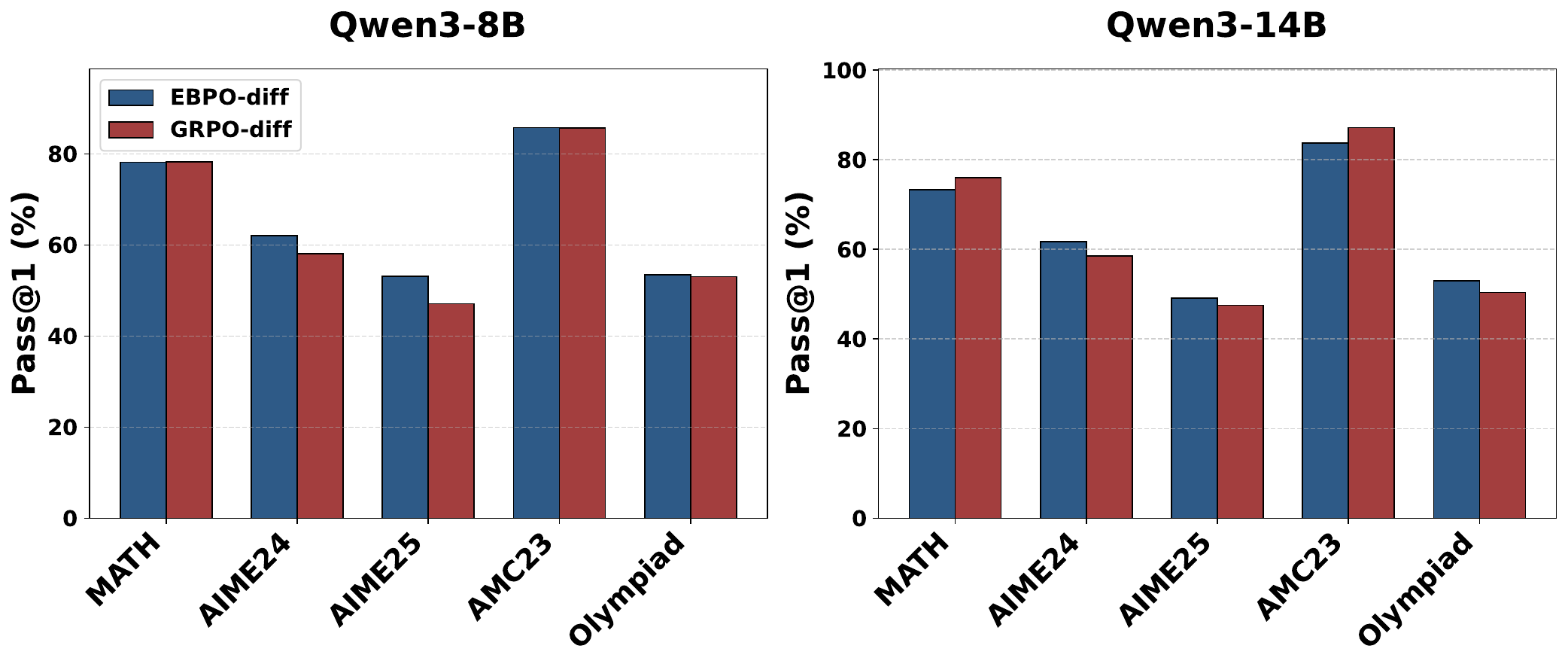}
    \caption{Performance comparison under difficulty-based curriculum learning.}
    \label{fig:curriculum_results}
\end{figure}



As illustrated in Figure~\ref{fig:curriculum_results}, \ours-diff shows a clear performance advantage over the GRPO baseline on high-difficulty benchmarks. Specifically, for Qwen3-8B, \ours-diff outperforms GRPO-diff by 3.95\% on AIME24 and 6.04\% on AIME25; a consistent lead is also observed for Qwen3-14B on AIME and OlympiadBench. This performance gap suggests that while standard group-based normalization suffices for conventional tasks like MATH and AMC23, grounding advantage estimation in stabilized evidence clusters is significantly more effective for the sparse reward landscapes of elite-level competition. By using a difficulty-based curriculum to isolate strong logical rationales early in training, \ours-diff reduces evidence noise and enables more effective transfer to complex, multi-step reasoning needed for competition-level problems.

\section{Conclusion}
In this paper, we presented \ours, a framework that enhances policy optimization by utilizing an Empirical Bayes approach to estimate reward mean and variance across clustered logical evidence. By replacing standard group-wide normalization with evidence-grounded statistics, \ours significantly improves sample efficiency. Furthermore, our difficulty-based curriculum stabilizes the estimation of these empirical priors, leading to substantial performance gains on reasoning benchmarks. Ultimately, \ours demonstrates that grounding reinforcement learning in shared semantic rationales provides a theoretically robust and computationally efficient path to elite-level mathematical reasoning.


\clearpage
\newpage
\bibliographystyle{assets/plainnat}
\bibliography{example_paper}

\clearpage
\newpage
\beginappendix

\section{Hyperparameter Details}
\label{sec:hyper_details}
We implement the \ours framework and all baselines using the \texttt{verl} package. For optimization, we utilize a mini-batch size of 128 per backpropagation step, yielding a total global batch size of 512. The learning rate is fixed at $1 \times 10^{-6}$, with a KL-divergence coefficient ($\beta$) set to $0.001$. For the specific hyperparameters associated with the DAPO baseline, we adhere strictly to the configurations established in the original study \citep{yu2025dapo}.

To evaluate the impact of difficulty-based curriculum learning, we train for a single epoch without data shuffling to maintain the easy-to-hard ordering, utilizing the final checkpoint for performance assessment. For topic-level clustering experiments, training is conducted for up to 1,000 steps. In this setting, we select the optimal checkpoint based on the highest performance achieved on a held-out validation set, evaluated using a majority vote over 16 rollouts. All baseline comparisons are conducted under identical hyperparameter configurations to ensure a fair evaluation.

To quantify the difficulty of each task relative to the base model, we perform sixteen independent rollouts per prompt using a sampling temperature of $T=1.0$. The difficulty metric for each question is defined as the empirical pass rate, calculated as the percentage of correct trajectories generated by the base model. 

For semantic topic categorization, we establish nine fixed domains within the mathematical dataset: \textit{Algebra \& Number Theory}, \textit{Calculus \& Analysis}, \textit{Geometry \& Trigonometry}, \textit{Discrete Mathematics}, \textit{Probability \& Statistics}, \textit{Linear Algebra}, \textit{Applied Mathematics \& Word Problems}, \textit{Physics \& Chemistry}, and \textit{Advanced Mathematics}. We leverage GPT-4.1 \citep{achiam2023gpt} to classify each question into exactly one of these categories. This high-level semantic labeling facilitates the topic-level evidence aggregation required for the Empirical Bayes advantage estimation in the \ours framework.

\section{Supplementary Materials}
\label{sec:supp_materials}

\subsection{Ablation Study: The Impact of Topic Clustering}
\label{appendix:ablation_clustering}

To isolate the contribution of semantic topic clustering to the overall performance of \ours, we compare the proposed \ours-topic against a baseline variant, \ours-naive. In the \ours-naive setting, the Empirical Bayes shrinkage mechanism is applied to standard, randomly shuffled training batches where prompts are likely to be semantically heterogeneous. This comparison allows us to verify if ``borrowing strength'' is a general statistical benefit or if its efficacy is tied to the semantic consistency of the evidence clusters.

\begin{table}[ht]
\centering
\small
\begin{tabular}{lccccc}
    \toprule
    \textbf{Method} & \textbf{MATH-500} & \textbf{AIME24} & \textbf{AIME25} & \textbf{AMC23} & \textbf{OlympiadBench} \\
    \midrule
    \multicolumn{6}{l}{\textbf{Qwen3-8B}} \\
    \ours-topic & \textbf{76.80} & \textbf{56.04} & \textbf{47.92} & 86.25 & \textbf{54.93} \\
    \ours-naive & 68.58 & 53.75 & 41.88 & \textbf{87.34} & 48.44 \\
    \midrule
    \multicolumn{6}{l}{\textbf{Qwen3-14B}} \\
    \ours-topic & 71.01 & \textbf{58.13} & \textbf{45.63} & 85.47 & \textbf{48.52} \\
    \ours-naive & \textbf{72.08} & 52.29 & 40.21 & \textbf{86.56} & 48.41 \\
    
    \bottomrule
    \end{tabular}
\caption{Ablation results comparing topic-clustered Empirical Bayes (\ours-topic) against naive, shuffled-batch Empirical Bayes (\ours-naive). Results are reported as Pass@1 (\%).}
\label{tab:ablation_clustering}
\end{table}

The results in Table~\ref{tab:ablation_clustering} demonstrate that semantic clustering is a vital prerequisite for effective Empirical Bayes regularization. Across both model scales, \ours-topic consistently outperforms the naive version on high-difficulty reasoning benchmarks like AIME and OlympiadBench. For instance, in the Qwen3-8B model, topic clustering improves the AIME25 score by 6.04\%. 

The failure of \ours-naive to reach similar heights can be attributed to the nature of the global prior. When a batch contains unrelated topics, the estimated prior mean $\mu_{\text{glob}}$ and variance $\tau^2$ represent a broad, high-entropy distribution of rewards that may not accurately describe the latent reward potential of any specific prompt. Consequently, the shrinkage estimator $V_q^{EB}$ pulls the local mean $\mu_{\text{group}}$  toward a noisy center, potentially suppressing valid gradient signals or introducing excessive bias. By contrast, clustering by topic ensures that the prior $\mu_{\text{glob}}$ is derived from semantically related tasks, providing a much sharper and more relevant baseline for advantage regularization.

\begin{algorithm}[tb]
   \caption{EBPO Advantage Estimation}
   \label{alg:EBPO}
\begin{algorithmic}[1]
   \STATE {\bfseries Input:} Policy $\pi_\theta$, Reference policy $\pi_{\text{ref}}$, Group size $G$
   \STATE Initialize Welford statistics $\mathcal{W}_{\text{glob}}$ and $\mathcal{W}_{\text{means}}$
   \FOR{each training iteration}
       \STATE Sample batch of prompts $\{q_1, \dots, q_M\} \sim \mathcal{D}$
       \FOR{$m=1$ {\bfseries to} $M$}
           \STATE Sample $G$ responses $\{o_{m,1}, \dots, o_{m,G}\} \sim \pi_\theta(\cdot | q_m)$
           \STATE Compute rewards $r_{m,i} \in \{0, 1\}$ and $\mu_{\text{group}, m} = \frac{1}{G}\sum_{i=1}^G r_{m,i}$
           \STATE Update $\mathcal{W}_{\text{glob}}$ with all $\{r_{m,i}\}_{i=1}^G$
           \STATE Update $\mathcal{W}_{\text{means}}$ with $\mu_{\text{group}, m}$
       \ENDFOR
       
       \STATE $\mu_{\text{glob}} \gets \text{mean}(\mathcal{W}_{\text{glob}})$
       \STATE $\sigma^2 \gets \text{variance}(\mathcal{W}_{\text{glob}})$
       \STATE $\tau^2 \gets \text{variance}(\mathcal{W}_{\text{means}})$

       \FOR{$m=1$ {\bfseries to} $M$}
    \STATE $\mathcal{S}_{q_m} \gets \frac{\sigma^2 / G}{\sigma^2 / G + \tau^2}$ \COMMENT{Compute shrinkage factor}
    \STATE $V_{q_m}^{EB} \gets (1 - \mathcal{S}_{q_m}) \mu_{\text{group}, m} + \mathcal{S}_{q_m} \mu_{\text{glob}}$ \COMMENT{EBPO Baseline}
    \FOR{$i=1$ {\bfseries to} $G$}
        \STATE $\hat{A}_{m,i}^{\text{raw}} \gets r_{m,i} - V_{q_m}^{EB}$ \COMMENT{Initial advantage}
    \ENDFOR
\ENDFOR

\STATE $\mu_{\hat{A}} \gets \text{mean}(\{\hat{A}_{m,i}^{\text{raw}}\})$
\STATE $\sigma_{\hat{A}} \gets \text{std}(\{\hat{A}_{m,i}^{\text{raw}}\})$

\FOR{$m=1$ {\bfseries to} $M$}
    \FOR{$i=1$ {\bfseries to} $G$}
        \STATE $\hat{A}_{m,i} \gets \frac{\hat{A}_{m,i}^{\text{raw}} - \mu_{\hat{A}}}{\sigma_{\hat{A}} + \epsilon}$ \COMMENT{Final batch-normalized advantage}
    \ENDFOR
\ENDFOR
       
       \STATE $\theta \gets \text{Update via Clipped Surrogate Objective with } \hat{A}_{m,i}$
   \ENDFOR
\end{algorithmic}
\end{algorithm}

\section{Proof Details}
\label{sec:proof_details}

\begin{proof}[Proof of Theorem~\ref{thm:non_vanishing}]

\textbf{1. GRPO Case:}
If $r_i = 0$ for all $i$, then the group mean is $\mu_{\text{group}} = \frac{1}{G}\sum 0 = 0$.
The raw advantage for any response $o_i$ is:
$$ \hat{A}^{raw}_{\text{GRPO}}(o_i) = r_i - \mu_{\text{group}} = 0 - 0 = 0 $$
Consequently, the gradient update $\nabla J = \mathbb{E}[\hat{A} \nabla \log \pi] = 0$. The model receives no learning signal.

\textbf{2. EBPO Case:}
With $\mu_{\text{group}} = 0$, the EBPO baseline becomes:
$$ V^{\text{EBPO}} = (1 - \mathcal{S})(0) + \mathcal{S}\mu_{\text{glob}} = \mathcal{S}\mu_{\text{glob}} $$
The raw advantage for any response $o_i$ is:
$$ \hat{A}^{raw}_{\text{EBPO}}(o_i) = r_i - V^{\text{EBPO}} = 0 - \mathcal{S}\mu_{\text{glob}} = -\mathcal{S}\mu_{\text{glob}} $$
Since $\mathcal{S} > 0$ and $\mu_{\text{glob}} > 0$ (assuming the policy has succeeded at least once in history), we have $\hat{A}^{raw}_{\text{EBPO}} < 0$. This yields a non-zero negative gradient, penalizing the generation of incorrect responses even when no correct response is present in the group.
\end{proof}

\begin{proof}[Proof of Theorem~\ref{thm:mse_stability}]
The Mean Squared Error of an estimator $\hat{\theta}$ is defined as $\mathbb{E}[(\hat{\theta} - \theta_q)^2]$.
For the GRPO baseline (Sample Mean):
$$ \text{MSE}(V^{\text{GRPO}}) = \text{Var}(\mu_{\text{group}}) = \frac{\sigma^2}{G} $$

For the EBPO baseline (Shrinkage Estimator), we seek to minimize the Bayes Risk (Expected MSE). The optimal linear estimator of the form $V = (1-w)\mu_{\text{group}} + w\mu_{\text{glob}}$ minimizes:
$$ \mathcal{L}(w) = \mathbb{E}_{\theta, r} [((1-w)\mu_{\text{group}} + w\mu_{\text{glob}} - \theta_q)^2] $$
Standard Bayesian derivation shows the optimal weight $w^*$ (which corresponds to our shrinkage factor $\mathcal{S}$) is:
$$ \mathcal{S} = \frac{\text{Var}(\mu_{\text{group}})}{\text{Var}(\mu_{\text{group}}) + \text{Var}(\theta_q)} = \frac{\sigma^2/G}{\sigma^2/G + \tau^2} $$
Substituting $\mathcal{S}$ back into the MSE equation yields:
$$ \text{MSE}(V^{\text{EBPO}}) = (1 - \mathcal{S}) \frac{\sigma^2}{G} $$
Since $0 < \mathcal{S} < 1$ (provided $\tau^2 > 0$), it follows that:
$$ \text{MSE}(V^{\text{EBPO}}) < \frac{\sigma^2}{G} = \text{MSE}(V^{\text{GRPO}}) $$
Thus, EBPO provides a more stable, lower-variance estimate of the prompt's difficulty, stabilizing the advantage computation when $G$ is small.
\end{proof}

\begin{proof}[Proof of Corollary~\ref{thm:global_context}]
From the result of Theorem \ref{thm:non_vanishing}, the raw advantage for a failure in a saturated group ($r_i=0, \forall i$) is:
$$ \hat{A}^{raw}_{\text{EBPO}}(fail) = -\mathcal{S}\mu_{\text{glob}} $$
Let us compare two hypothetical regimes represented by the global history:
\begin{enumerate}
    \item \textbf{Easy Regime:} The policy generally succeeds, so $\mu_{\text{glob}} \approx 1$.
    \item \textbf{Hard Regime:} The policy generally fails, so $\mu_{\text{glob}} \approx \epsilon$ (where $0 < \epsilon \ll 1$).
\end{enumerate}
Taking the magnitude of the penalty (gradient signal):
$$ |\hat{A}^{raw}_{\text{easy}}| \propto \mathcal{S}(1) \quad \text{and} \quad |\hat{A}^{raw}_{\text{hard}}| \propto \mathcal{S}(\epsilon) $$
Clearly, $|\hat{A}^{raw}_{\text{easy}}| \gg |\hat{A}^{raw}_{\text{hard}}|$. 
EBPO uses the global context to differentiate these scenarios: it applies a strong correction signal when the model fails a task that is statistically ``easy'' (deviating significantly from the prior), but applies a gentle correction when the model fails a task that is ``hard'' (consistent with the prior), thereby preventing catastrophic forgetting or instability on difficult tasks.
\end{proof}

\begin{proof}[Proof of Proposition~\ref{thm:entropy_conservation}]
We invoke the result from \citet{cui2025entropy}, which establishes that the entropy decay is dominated by the covariance between the policy likelihood and the estimated advantage:
\begin{equation*}
    \Delta H(\pi) \approx \eta \mathbb{E}_{q \sim \mathcal{D}} \left[ \sum_{o} \text{Cov}_{\pi}\left( \log \pi_\theta(o|q), \hat{A}(o, q) \right) \right]
\end{equation*}

We analyze the sensitivity of the entropy update to the advantage normalization schemes. Let $\Omega_q = \text{Cov}_{\pi}(\log \pi(o|q), r(o))$ denote the alignment between the policy's log-likelihood and the raw rewards for a specific group $q$.

In GRPO, the advantage is normalized by the local group statistics. The advantage is $\hat{A}_{\text{GRPO}}(o) = \frac{r(o) - \mu_q}{\sigma_q}$. Substituting this into the entropy update:
\begin{align*}
    \Delta H_{\text{GRPO}} &\approx \eta \mathbb{E}_{q} \left[ \text{Cov}\left( \log \pi, \frac{r - \mu_q}{\sigma_q} \right) \right] \\
    &= \eta \mathbb{E}_{q} \left[ \frac{1}{\sigma_q} \text{Cov}( \log \pi, r - \mu_q ) \right] \\
    &= \eta \mathbb{E}_{q} \left[ \frac{\Omega_q}{\sigma_q} \right] \\
    & \text{(since $\mu_q$ is constant w.r.t local covariance)}
\end{align*}
This update is sensitive to $1/\sigma_q$. For "saturated" prompts where responses are highly similar, $\sigma_q \to 0$, causing the update magnitude to diverge.

In EBPO, the raw advantage is $A^{\text{raw}} = r(o) - V^{EBPO}_q$. The final advantage is standardized over the entire batch:
\begin{equation*}
    \hat{A}_{\text{EBPO}}(o) = \frac{A^{\text{raw}} - \mu_{\hat{A}}}{\sigma_{\text{batch}}} = \frac{r(o) - V^{EBPO}_q - \mu_{\hat{A}}}{\sigma_{\text{batch}}}
\end{equation*}
where $\mu_{\hat{A}}$ and $\sigma_{\text{batch}}$ are the mean and standard deviation of the raw advantages across the entire batch.
Substituting this into the entropy update:
\begin{align*}
    \Delta H_{\text{EBPO}} &\approx \eta \mathbb{E}_{q} \left[ \text{Cov}\left( \log \pi, \frac{r - V^{EBPO}_q - \mu_{\hat{A}}}{\sigma_{\text{batch}}} \right) \right] \\
    &= \frac{\eta}{\sigma_{\text{batch}}} \mathbb{E}_{q} \left[ \text{Cov}( \log \pi, r - \underbrace{(V^{EBPO}_q + \mu_{\hat{A}})}_{\text{Constant w.r.t } o} ) \right]
\end{align*}
By the translation invariance of covariance ($\text{Cov}(X, Y - C) = \text{Cov}(X, Y)$), the baseline $V^{EBPO}_q$ and the batch mean $\mu_{\hat{A}}$ vanish from the covariance term:
\begin{equation*}
    \Delta H_{\text{EBPO}} = \frac{\eta}{\sigma_{\text{batch}}} \mathbb{E}_{q} [ \Omega_q ]
\end{equation*}
Assuming the covariance term $\Omega_q$ is independent of the group scale $\sigma_q$, the expected entropy reduction is governed by the effective scaling factors:
$$ \mathbb{E}_q \left[ \frac{1}{\sigma_q} \right] \quad \text{vs} \quad \frac{1}{\sigma_{\text{batch}}} $$

By the Law of Total Variance, the global batch variance $\sigma_{\text{batch}}^2$ decomposes exactly into the expected within-group variance and the between-group variance:
\begin{equation}
    \sigma_{\text{batch}}^2 = \mathbb{E}_q[\sigma_q^2] + \text{Var}_q(\mu_q)
\end{equation}
Given that the task distribution is heterogeneous (i.e., prompts have varying difficulty), we have $\text{Var}_q(\mu_q) > 0$. This implies a strict inequality for the second moments:
\begin{equation}
    \sigma_{\text{batch}} = \sqrt{\mathbb{E}_q[\sigma_q^2] + \text{Var}_q(\mu_q)} > \sqrt{\mathbb{E}_q[\sigma_q^2]} \label{step1}
\end{equation}

Note that for any random variable $X$, $\sqrt{\mathbb{E}[X^2]} \geq \mathbb{E}[X]$. Letting $X = \sigma_q$:
\begin{equation}
    \sqrt{\mathbb{E}_q[\sigma_q^2]} \geq \mathbb{E}_q[\sigma_q] \label{step2}
\end{equation}
Combining \eqref{step1} and \eqref{step2}:
\begin{equation*}
    \sigma_{\text{batch}} > \mathbb{E}_q[\sigma_q]
\end{equation*}

Consider the function $f(x) = 1/x$, which is strictly convex for $x > 0$. By Jensen's Inequality:
\begin{equation*}
    \mathbb{E}_q \left[ \frac{1}{\sigma_q} \right] \geq \frac{1}{\mathbb{E}_q[\sigma_q]}
\end{equation*}
Chaining the inequalities from above yields a strict ordering of the decay coefficients:
\begin{equation*}
    \mathbb{E}_q \left[ \frac{1}{\sigma_q} \right] \geq \frac{1}{\mathbb{E}_q[\sigma_q]} > \frac{1}{\sigma_{\text{batch}}}
\end{equation*}
Consequently, the entropy reduction for GRPO strictly upper-bounds that of EBPO:
\begin{equation*}
    \Delta H_{\text{GRPO}} = \eta \mathbb{E}_q \left[ \frac{\Omega_q}{\sigma_q} \right] > \eta \frac{1}{\sigma_{\text{batch}}} \mathbb{E}_q [\Omega_q] = \Delta H_{\text{EBPO}}
\end{equation*}
This proves that the global normalization in EBPO strictly enforces a more conservative policy update bound than GRPO, preventing the excessive entropy collapse associated with small-variance groups.
\end{proof}

\begin{proof}[Proof of Proposition~\ref{prop:clustering}]
We seek to minimize the expected squared error of the prior, defined as $J = \mathbb{E}_{q \sim \mathcal{D}} [(\hat{\mu}_{\text{prior}} - \theta_q)^2]$.

\textbf{Case 1: Random Shuffle (Topic-Agnostic)}
By Law of Large Numbers, the online estimator converges to the global expectation over the entire dataset. Thus, $\hat{\mu}_{\text{prior}} = \mu_{\text{glob}}$. The loss function becomes the total variance of the difficulty distribution:
\begin{equation*}
    J_{\text{shuffle}} = \mathbb{E}_{q} [(\mu_{\text{glob}} - \theta_q)^2] = \text{Var}(\theta_q)
\end{equation*}

\textbf{Case 2: Topic-Coherent (Topic-Conditional)}
In the idealized topic-coherent regime (assuming the Welford update is sufficiently fast to adapt to the topic shift), the estimator converges to the conditional expectation of the current cluster. For a prompt $q \in \mathcal{C}_k$, the prior is $\hat{\mu}_{\text{prior}} = \mu_k$. The loss function is the expected variance within each cluster:
\begin{equation*}
    J_{\text{coherent}} = \mathbb{E}_{k} \left[ \mathbb{E}_{q \in \mathcal{C}_k} [(\mu_k - \theta_q)^2] \right] = \mathbb{E}_k [\text{Var}(\theta_q \mid k)]
\end{equation*}

The total variance can be decomposed into the expected conditional variance (within-group) and the variance of the conditional expectations (between-group):
\begin{equation}
    \text{Var}(\theta_q) = \mathbb{E}_k [\text{Var}(\theta_q \mid k)] + \text{Var}_k (\mathbb{E}[\theta_q \mid k])
\end{equation}
Substituting our loss terms:
\begin{equation}
    J_{\text{shuffle}} = J_{\text{coherent}} + \text{Var}_k(\mu_k)
\end{equation}
Since the topics have distinct mean difficulties by definition, the between-topic variance is strictly positive: $\text{Var}_k(\mu_k) > 0$.
Therefore:
\begin{equation}
    J_{\text{coherent}} < J_{\text{shuffle}}
\end{equation}
The topic-coherent ordering strictly reduces the estimation error of the prior, thereby improving the accuracy of the EBPO baseline.
\end{proof}

\end{document}